\newtheorem{theorem}{Theorem}[section]
\newtheorem{remark}[theorem]{Remark}
\title{Internet 3.0:  Architecture for a Web-of-Agents with it's Algorithm for Ranking Agents}
\author{
  Rajesh T. Krishanamachari\thanks{Equal contribution.} \\
  NYU \\
  \texttt{rtk267@nyu.edu}
  \And
  Srividya Rajesh\footnotemark[1] \\
  Independent Researcher \\
  \texttt{sk3957@caa.columbia.edu}
}
\begin{document}

\maketitle

%%%%%%%%%%%%%%%%%%%%%%%%%%%%%%%%%%%%%%%%%%%%%%%%%%%%%
%%%%%%%%%%%%%%%%%%%%%%%%%%%%%%%%%%%%%%%%%%%%%%%%%%%%%
%%%%%%%%%%%%%%%%%%%%%%%%%%%%%%%%%%%%%%%%%%%%%%%%%%%%%
\begin{abstract}
AI agents---powered by reasoning-capable large language models (LLMs) and integrated with tools, data, and web search---are poised to transform the internet into a \emph{Web of Agents}: a machine-native ecosystem where autonomous agents interact, collaborate, and execute tasks at scale. Realizing this vision requires \emph{Agent Ranking}---selecting agents not only by declared capabilities but by proven, recent performance. Unlike Web~1.0's PageRank, a global, transparent network of agent interactions does not exist; usage signals are fragmented and private, making ranking infeasible without coordination.

We propose \textbf{DOVIS}, a five-layer operational protocol (\emph{Discovery, Orchestration, Verification, Incentives, Semantics}) that enables the collection of minimal, privacy-preserving aggregates of usage and performance across the ecosystem. On this substrate, we implement \textbf{AgentRank-UC}, a dynamic, trust-aware algorithm that combines \emph{usage} (selection frequency) and \emph{competence} (outcome quality, cost, safety, latency) into a unified ranking. We present simulation results and theoretical guarantees on convergence, robustness, and Sybil resistance, demonstrating the viability of coordinated protocols and performance-aware ranking in enabling a scalable, trustworthy Agentic Web.
\end{abstract}

%%%%%%%%%%%%%%%%%%%%%%%%%%%%%%%%%%%%%%%%%%%%%%%%%%%%%
%%%%%%%%%%%%%%%%%%%%%%%%%%%%%%%%%%%%%%%%%%%%%%%%%%%%%
%%%%%%%%%%%%%%%%%%%%%%%%%%%%%%%%%%%%%%%%%%%%%%%%%%%%%
%%%%%%%%%%%%%%%%%%%%%%%%%%%%%%%%%%%%%%%%%%%%%%%%%%%%%

% Uncomment the next line to include the Impact Statement
\newif\ifimpact
%\impacttrue   % comment this line out to hide it

\ifimpact
\section*{Impact Statement}
We provide the protocol and algorithmic foundation for scalable, trustworthy, and performance-driven discovery in the Agentic Web
\fi

%%%%%%%%%%%%%%%%%%%%%%%%%%%%%%%%%%%%%%%%%%%%%%%%%%%%%
%%%%%%%%%%%%%%%%%%%%%%%%%%%%%%%%%%%%%%%%%%%%%%%%%%%%%
%%%%%%%%%%%%%%%%%%%%%%%%%%%%%%%%%%%%%%%%%%%%%%%%%%%%%
%%%%%%%%%%%%%%%%%%%%%%%%%%%%%%%%%%%%%%%%%%%%%%%%%%%%%

\section{Introduction}

The web is at an inflection point. AI-generated answers in search and the rapid adoption of conversational systems such as ChatGPT have sharply reduced traffic to individual websites---by 2025, over 60\% of Google searches ended without a click-through. This erosion of the open web's traffic undermines the incentive structures that have sustained it for decades and signals the need to move beyond a page-centric model. In parallel, AI agents---reasoning-capable LLM-based systems equipped with tools such as web search, APIs, and retrieval---are emerging as powerful intermediaries capable of executing complex tasks autonomously. Yet these agents remain constrained by the need to scrape and parse interfaces built for human--machine interaction, a mismatch akin to asking driverless cars to navigate streets designed exclusively for human drivers \cite{ehtesham2025survey, chang2025anpwhitepaper, Park2023,Li2023,Wu2023,Liu2024,Shinn2023,Wang2023}.

We term the emerging paradigm the \emph{Agentic Web}: a machine-native network in which autonomous AI agents are first-class citizens \cite{yang2025agentic}. In this vision, users convey intent to a personal agent, which plans, coordinates, and executes tasks by interacting with an internet-scale ecosystem of other agents. Websites, initially augmented and ultimately supplanted by agents, expose their capabilities through structured, discoverable interfaces. Most network traffic becomes agent-to-agent, and knowledge is both produced and consumed autonomously. Interaction shifts from browsing and reading to delegating and fulfilling---an evolution as fundamental as the transition from static web pages to platorms with dynamic, user-generated content.

Realizing this vision requires more than new agent capabilities; it demands a robust mechanism for selecting the best agents across a vast and dynamic ecosystem. We call this mechanism \textbf{AgentRank-UC}, a global ranking that combines \emph{usage} (how often an agent is chosen) and \emph{competence} (how well it performs) into a dynamic, trust-aware metric. Unlike Web~1.0's PageRank, which could be computed from publicly available hyperlinks \cite{BrinPage1998,Kleinberg1999,Haveliwala2002,Haveliwala2003,LangvilleMeyer2006,Gleich2015}, no global, transparent record of agent-to-agent interactions exists today. Without correctly-incentivized cooperation, usage and performance signals will likely remain private, fragmented, and siloed across platforms rendering large-scale, performance-aware discovery infeasible.

We address this gap with \textbf{DOVIS}, a five-layer operational protocol comprising \emph{Discovery, Orchestration, Verification, Incentives}, and \emph{Semantics}. Drawing upon previous research on layered internet architectures and observability systems \cite{Saltzer1984,Clark1988,ISO7498,RFC9000,Sigelman2010,OpenTelemetry}, DOVIS defines how usage and performance telemetry can be collected in a minimal, privacy-preserving, and verifiable way. \emph{Orchestration} specifies the caller-side reporting contract; \emph{Verification} binds reports to agent identities and enables auditing; \emph{Incentives} reward accurate reporting and penalize misrepresentation; \emph{Semantics} standardizes telemetry formats; and \emph{Discovery} consumes these signals to produce global rankings.

On top of this foundation, \textbf{AgentRank-UC} extends the link-analysis approach of PageRank to two evolving graphs: a usage graph capturing agent selection patterns and a competence graph capturing task outcomes. These graphs are integrated into a single ranking with recency decay, monotonicity guarantees, and robustness to Sybil attacks. Together, DOVIS and AgentRank-UC enable scalable, trustworthy, and economically viable coordination in the Agentic Web. We present simulation results and theoretical analysis demonstrating that coordinated telemetry and performance-aware ranking can make large-scale agent discovery both feasible and resilient in open, adversarial environments.

\section*{Contributions}

This paper makes the following contributions:

\begin{itemize}
    \item \textbf{Framing of the Agentic Web challenge.} We identify the fundamental obstacle to large-scale agent discovery: the absence of a transparent, global interaction graph comparable to hyperlinks on the Web. We argue that voluntary, privacy-preserving telemetry is necessary to enable competence-aware discovery.  

    \item \textbf{Design of the DOVIS protocol.} We introduce DOVIS, a five-layer protocol that specifies how agents publish minimal telemetry (OAT-Lite), how reports are orchestrated and verified, how incentives align participation, and how semantics ensure interoperability. This design provides the first concrete proposal for coordinated telemetry in open agent ecosystems.  

    \item \textbf{AgentRank-UC algorithm.} We propose AgentRank-UC, a minimal yet principled ranking algorithm that combines usage and competence into a single vector through coupled fixed-point equations. The algorithm incorporates recency, smoothing, priors for cold-starts, and guarantees existence and convergence.  

    \item \textbf{Theoretical analysis.} We show that AgentRank-UC satisfies desirable properties including uniqueness, monotonicity, and robustness to sparse or noisy telemetry, thereby ensuring well-posedness.  

    \item \textbf{Simulation framework and evaluation.} We design a simulation environment with archetypal agents, shocks, and adversarial behaviors. Results demonstrate that AgentRank-UC outperforms usage-only or competence-only baselines, adapts quickly to performance drift, and resists Sybil-style manipulation.  
\end{itemize}

Together, these contributions provide the first end-to-end framework for competence-aware discovery in the Agentic Web, establishing both the telemetry substrate (DOVIS) and the ranking algorithm (AgentRank-UC) required to make it operational.

%%%%%%%%%%%%%%%%%%%%%%%%%%%%%%%%%%%%%%%%%%%%%%%%%%%%%
%%%%%%%%%%%%%%%%%%%%%%%%%%%%%%%%%%%%%%%%%%%%%%%%%%%%%
%%%%%%%%%%%%%%%%%%%%%%%%%%%%%%%%%%%%%%%%%%%%%%%%%%%%%
%%%%%%%%%%%%%%%%%%%%%%%%%%%%%%%%%%%%%%%%%%%%%%%%%%%%%
\section{Protocol Layer}

\subsection{Minimal Telemetry Setting}

To ground our discussion, we begin with the simplest setting in which telemetry could make agent discovery possible. Imagine a future ecosystem of autonomous AI agents operating inside an \emph{agent marketplace}. Much like an app store for mobile software, the marketplace provides discovery and routing services so that callers---which may be either human users or other agents---can identify suitable callees that perform specific tasks. In contrast to the early web, however, there is no transparent hyperlink graph of agent interactions. Each invocation is private to the caller and callee, and unless some form of aggregate reporting is introduced, discovery algorithms cannot distinguish between agents that are merely popular and those that are genuinely competent.

We therefore propose a minimal form of telemetry: \emph{agents publish only aggregate statistics of their observed interactions}. This mechanism is deliberately lightweight, requiring no raw prompts, responses, or user data. Instead, it operates entirely on caller-side summaries that can be generated efficiently from local logs. 

Concretely, for each reporting epoch (for example, hourly or daily) and for each pair consisting of a callee $j$ and a task type $k$, a caller $i$ produces a compact record containing:
\begin{itemize}
    \item the total number of calls made to $j$ for task $k$, denoted \texttt{n\_calls};
    \item the number of those calls judged successful, \texttt{n\_success};
    \item cumulative measures of performance, including quality scores (\texttt{sum\_quality}), latency (\texttt{sum\_latency}), cost (\texttt{sum\_cost}), and risk or safety penalties (\texttt{sum\_risk});
    \item metadata fields identifying the reporting epoch and the caller’s identity, along with a digital signature to guarantee integrity.
\end{itemize}

We refer to this minimal schema as \emph{OAT-Lite} (Open Agent Telemetry - Lite). Although it records only a handful of aggregate fields, these statistics are precisely the sufficient inputs required by the AgentRank-UC algorithm described in Section~3. From the collection of such records, an indexer can reconstruct decayed counts $N_{ij}^{(k)}$, success totals $S_{ij}^{(k)}$, and averaged values for quality, latency, cost, and risk. With these in hand, it is straightforward to form the usage kernel $P$, the competence kernel $Q$, and ultimately a global ranking of agents.

This minimal telemetry setting provides an important conceptual baseline. Even if adoption were limited to bounded environments such as marketplaces, the publication of such simple aggregates would already enable competence-aware discovery at scale. More sophisticated mechanisms can then build on this foundation, but crucially, the minimal setting demonstrates that no invasive logging or heavyweight reporting is necessary for the Agentic Web to support ranking by usage and competence. This minimalist reporting approach also aims to be compatible with emerging agent interoperability efforts (e.g., MCP, ACP, A2A, ANP) \cite{ehtesham2025survey}.

\subsection{Orchestration: Who Reports, When, and How?}

In the near term, it is most natural to imagine deployment of DOVIS inside an \emph{agent marketplace}, a bounded ecosystem in which an indexer is already responsible for cataloguing available agents. Callers invoke callees for a given task type $k$, and each side of the interaction observes a different slice of information. The purpose of orchestration is to turn these private, asymmetric observations into minimal, consistent aggregates that the indexer can assemble into the sufficient statistics required by AgentRank-UC (Section~3), without exposing raw interaction content.  

\subsubsection*{Roles and responsibilities}  
The \emph{caller} is the primary reporter. For each callee--task pair $(j,k)$, the caller $i$ produces per-epoch aggregates: decayed call counts $N$, decayed successes $S$, and decayed sums for quality, latency, cost, and risk. These align naturally with what the caller can observe and map directly to the algorithm’s sufficient statistics.  

The \emph{callee} may optionally publish acknowledgments of the number of calls it has received in the same epoch, broken down by task type. These acknowledgments need not contain outcome fields, but they provide a lightweight cross-check of caller reports. This division of labor ensures that the caller remains the single source of truth for outcomes, avoiding duplication, while still allowing callees to strengthen integrity by confirming usage volumes.  Our choice to keep outcome semantics on the caller side, with callee acknowledgments, echoes end-to-end design principles and practical telemetry systems \cite{Saltzer1984,Sigelman2010,OpenTelemetry}.

\subsubsection*{Reporting cadence and windows}  
To avoid the inefficiency of continuous reporting, telemetry is submitted in fixed \emph{epochs} $E_t$, such as hourly or daily intervals in UTC. Within each epoch, callers aggregate their events locally; at epoch close, they submit one record for each $(i,j,k)$.  

The length of an epoch should be chosen relative to the exponential decay kernel $\omega(\tau) = e^{-\lambda \tau}$ used in AgentRank-UC, where the half-life is $H = \ln 2 / \lambda$. The requirement is that $|E_t| \ll H$, so that decayed aggregates remain fresh without requiring excessive submissions. Shorter epochs improve responsiveness to shocks, but increase overhead; the half-life parameter provides a principled way to justify the chosen cadence.  

\subsubsection*{Record structure and aggregation semantics}  
Each telemetry record is keyed by $(i,j,k,E_t)$ and contains:
\[
\{\texttt{n\_calls}, \texttt{n\_success}, \texttt{sum\_quality}, \texttt{sum\_latency}, \texttt{sum\_cost}, \texttt{sum\_risk}, \texttt{epoch\_id}, \texttt{caller\_id}, \texttt{signature}\}.
\]

These fields are defined as \emph{per-epoch snapshots} (not deltas) of the decayed sufficient statistics as of epoch close. If a record is resubmitted, the indexer retains the version with the most recent signature timestamp, ensuring idempotency. Using snapshots avoids double-counting under retries, while computing decay at the caller side keeps the payload compact.  

\subsubsection*{Transport, batching, and privacy}  
Callers transmit signed records to the marketplace indexer, either directly or via a relay. Relays can batch submissions, enforce rate limits, and provide privacy enhancements such as stripping network metadata or aggregating reports into $k$-anonymity buckets. Each record is compact: only one per nonzero $(j,k)$ per epoch, containing aggregate values and no raw prompts, responses, or user data. Direct submission is simplest, while relays offer scalability and privacy without changing semantics.  

\subsubsection*{Handling partial, missing, and late data}  
If a caller lacks information for some fields (for example, if cost or risk cannot be estimated), it may still submit call counts and successes. The indexer can then construct the usage kernel $P$, while the competence kernel $Q$ will automatically down-weight missing features by reverting to defaults in the utility function.  

The absence of a record for a given $(i,j,k)$ in an epoch is interpreted as zero mass. Records include an \texttt{epoch\_id}; the indexer accepts late submissions up to a bounded grace window (e.g., one or two epochs after close). After that window, late data are ignored or logged only for audit purposes. This tolerance maximizes participation while ensuring timely closure of epochs.  

\subsubsection*{Assembly at the indexer}  
At the end of each epoch, the indexer:
\begin{enumerate}
    \item Deduplicates records by key $(i,j,k,E_t)$, using last-write-wins.  
    \item Validates signatures and checks field ranges and units.  
    \item Computes usage weights $U_{ij} = \sum_k N_{ij}^{(k)}$ and competence weights $C_{ij} = \sum_k N_{ij}^{(k)} \phi(u_{ij}^{(k)})$, exactly as required in Section~3.  
    \item Row-normalizes these into the stochastic kernels $P$ and $Q$, backing off empty rows to priors $v$ and $w$.  
    \item Runs the fixed-point iterations for usage and competence ranks ($x$ and $y$), then combines them into the final ranking vector $r$.  
\end{enumerate}

Thus, orchestration ends at the point of delivering consistent aggregates to the indexer. Verification, incentives, and semantics---addressed in the next subsections---provide additional guarantees on integrity, participation, and interpretability.  

\subsubsection*{Default operating point}  
For practical deployment, we recommend: hourly epochs within marketplaces, and daily epochs in open federations; decay half-life $H$ between one and seven days, balancing recency and stability; snapshot submissions by callers, with optional callee acknowledgments; idempotent retries permitted, with last-write-wins resolution; late submissions accepted for up to two epochs; and sparse reporting, with only keys where \texttt{n\_calls} $>0$ included.  

These defaults provide a robust baseline configuration, while leaving room for more stringent or relaxed policies in different ecosystems.

\subsection{Verification: Ensuring Integrity of Telemetry}

The minimal telemetry scheme presumes that agents report honestly. In reality, however, strategic misreporting poses the most serious threat. Without safeguards, rankings could be manipulated by inflating successes, suppressing failures, fabricating calls, or cycling through identities. Verification is therefore essential to ensure that the aggregated records ingested by the indexer remain credible enough for AgentRank-UC to produce meaningful results.  

\subsubsection*{Core threats}  
Several distinct attack surfaces must be considered. Our exposition below on the verification surface is shaped by long-standing lessons from internet protocol design \cite{Clark1988,RFC9000} and modern secure computation/attestation techniques for audits without raw data \cite{Bonawitz2017,Pinkas2014,DeCristofaro2012,Costan2016}. A caller may practice \emph{success inflation}, reporting more successful outcomes than were actually observed, thereby artificially boosting a callee’s competence weight. Conversely, \emph{failure suppression} hides errors or harmful outcomes, biasing averages upward. Groups of colluding agents may engage in \emph{usage pumping}, repeatedly calling one another to fabricate high call volumes without delivering real utility. Finally, agents penalized for dishonesty may attempt \emph{identity churn}, re-entering under new identifiers and presenting themselves as ``newcomers.''

\subsubsection*{Lightweight defenses}  
To mitigate these threats without undermining the minimality of the protocol, DOVIS introduces three complementary layers of defense.  

First, each telemetry record is bound to a persistent identity through \emph{cryptographic signatures}. Callers sign their reports with registered keys; these keys are recorded with the marketplace indexer and made visible for auditing. While signatures cannot guarantee truthful content, they prevent forgery and impersonation, ensuring that every record can be attributed to a stable identity.  

Second, \emph{callee acknowledgments} provide a lightweight cross-check. Although callees cannot observe success or quality, they do know the number of invocations received. By publishing per-epoch counts of calls, they allow the indexer to compare these totals against caller-reported \texttt{n\_calls}. Significant discrepancies beyond a tolerance threshold can trigger further investigation. This mechanism is optional but inexpensive, and it raises overall integrity without duplicating semantics.  

Third, the protocol incorporates \emph{randomized audits}. In each epoch, the indexer samples a small fraction (e.g., 1--5\%) of caller--callee--task triples for deeper scrutiny. Audits may involve comparing caller reports against callee acknowledgments, replaying logged call identifiers if hashed IDs are available, or requesting attested proof from trusted execution environments. Even with a low audit probability, the prospect of penalties makes systematic cheating unattractive.  

\subsubsection*{Weighting and penalties}  
Verification also relies on adaptive weighting. Reports from stronger identities, such as verified accounts, staked agents, or runtimes attested via Trusted Execution Environments, are given higher prior weight in AgentRank-UC. Unverified agents continue to contribute, but their influence is proportionally reduced. When audits reveal serious misreporting, penalties are imposed: future contributions are down-weighted or excluded, and visibility in rankings is diminished. At the same time, the protocol tolerates minor discrepancies within a small $\varepsilon$-tolerance, recognizing that benign reporting noise should not be punished.  

\subsubsection*{Handling Sybils and churn}  
Two additional risks require attention. \emph{Sybil detection} targets coordinated usage-pumping rings, which typically manifest as tightly connected clusters with sudden spikes in mutual calls. Indexers can monitor graph structure to detect such anomalies and trigger audits. To counter \emph{identity churn}, the protocol applies ramp-up rules: new identifiers begin with limited influence, which increases gradually over several epochs unless supported by stronger verification such as staking or attestation.  

\subsubsection*{Practical baseline}  
A practical baseline configuration of DOVIS verification includes the following: cryptographic signatures are mandatory for all reports; callee acknowledgments are optional but recommended; the audit regime samples roughly 1--5\% of edges per epoch with penalties for dishonesty; and all agents are required to maintain persistent keys, with higher trust assigned to attested or staked identities.  

\medskip
\noindent\textbf{Summary.} Verification complements orchestration by binding reports to persistent identities, introducing optional cross-checks, and incorporating probabilistic audits backed by penalties. These measures deter manipulation while keeping overhead low, ensuring that the telemetry stream remains credible enough to support competence-aware discovery through AgentRank-UC.

\subsection{Incentives: Why Would Agents Contribute?}

The minimal telemetry scheme implicitly assumes that agents willingly generate and publish reports. In practice, however, this is not guaranteed. Producing telemetry, even in aggregate form, imposes costs in computation, bandwidth, and time, while the benefits of improved discovery are distributed across the ecosystem. This creates the classic conditions for free-riding and incentive misalignment. Without explicit mechanisms to reward participation and discourage dishonesty, reporting rates may remain too low for AgentRank-UC to operate reliably.  

\subsubsection*{Core challenges}  
Several incentive problems are particularly salient.  

First, there is the issue of \emph{free-riding}: an agent can benefit from improved rankings powered by the telemetry of others, while withholding its own reports. Second, even minimal aggregation entails a \emph{reporting cost}; agents operating under tight computational or bandwidth constraints may choose to save resources by not reporting. Third, there is the \emph{risk of honesty}: if agents believe competitors are misreporting, they may feel disadvantaged by submitting accurate telemetry, which could erode trust in the scheme. Finally, new entrants face a \emph{cold-start disincentive}. If agents assume that reporting has little effect before they have accumulated significant history, they may decline to participate in the early stages when adoption is most critical.  

\subsubsection*{Candidate incentive mechanisms}  
To address these challenges, DOVIS supports a range of incentive mechanisms, each designed to align individual interests with the collective good.  

One approach is to provide \emph{exposure rewards}. Indexers can boost the visibility of agents who consistently provide telemetry. For instance, when two agents are otherwise tied in competence, the agent with more reliable reporting is ranked slightly higher. This creates a virtuous cycle: honest reporters attract more calls, generating richer telemetry that further strengthens their rank.  

A complementary mechanism is to impose \emph{penalties for dishonesty}. As described in Section~2.3, failed audits trigger slashing penalties, such as down-weighting future contributions, reducing exposure in rankings, or even temporary suspension from the marketplace. The key idea is that the expected cost of lying must outweigh any short-term gains.  

Another option is to introduce \emph{telemetry credits}. In this scheme, submitting valid reports earns credits redeemable for tangible benefits such as reduced platform fees, preferential placement in search results, or access to premium services. This reframes telemetry not as an overhead, but as a resource-generating action.  

Finally, to encourage new participants, DOVIS incorporates \emph{cold-start support}. AgentRank-UC already assigns non-zero baseline ranks through priors (Section~3), ensuring that newcomers are not invisible by default. Marketplaces can make this explicit: ``Your rank starts visible; consistent telemetry will boost you faster.'' This messaging lowers the barrier to entry and ensures that honest reporting is valuable from the outset.  

\subsubsection*{Practical baseline}  
A practical baseline configuration would include: a modest exposure boost for telemetry reporters; rank down-weighting on failed audits as a default penalty; optional credits or fee discounts as marketplace-specific extensions; and cold-start priors as a built-in guarantee that participation is never futile at entry.  

\medskip
\noindent\textbf{Summary.} Incentives close the loop between the individual costs of producing telemetry and the collective benefits of improved discovery. By combining exposure rewards, penalties for dishonesty, optional credit systems, and cold-start support, DOVIS ensures that voluntary reporting becomes not merely a cost but a net positive for participating agents.  Our incentive mechanisms for honest telemetry dovetail with federated and privacy-preserving learning ecosystems where participation costs must be offset \cite{McMahan2017,DworkRoth2014,Abadi2016,Bonawitz2017}.

\subsection{Semantics: What Do the Numbers Mean?}

Even if orchestration, verification, and incentives are successfully addressed, telemetry remains useless unless participants interpret the fields they report in a consistent way. In the minimal scheme, ambiguity about units, scales, or task labels would make comparisons across agents misleading. For rankings to be meaningful, semantics must ensure that ``success,'' ``latency,'' or ``translation'' carry the same operational meaning across the ecosystem.  

\subsubsection*{Core challenges}  
Several risks arise when semantics are left undefined. A common issue is \emph{metric inconsistency}: one agent might log latency in seconds while another records milliseconds; quality could be represented on a five-point ordinal scale or normalized to the unit interval $[0,1]$. A second concern is \emph{task ambiguity}. Without a shared taxonomy, an invocation labeled as ``translation'' could mean human-language translation for one agent and chemical formula conversion for another.  

A further challenge is \emph{schema drift}. As new metrics are introduced---such as energy consumption or fairness indicators---older reports may no longer align with the current schema. Finally, \emph{interoperability} is at stake: agents from different marketplaces may report superficially similar fields that differ in subtle but consequential ways, impeding federation across ecosystems.  In particular, interoperability with existing and emerging agent network specifications like ANP must be a key goal for OAT-Lite/OAT-Full evolution \cite{chang2025anpwhitepaper,ehtesham2025survey}.

\subsubsection*{Candidate solutions}  
DOVIS addresses these issues by defining a minimal but explicit semantic layer.  

At its core is the \emph{OAT-Lite baseline schema}. Every report must include fixed fields---\texttt{n\_calls}, \texttt{n\_success}, \texttt{sum\_quality}, \texttt{sum\_latency}, \texttt{sum\_cost}, \texttt{sum\_risk}---together with metadata such as \texttt{epoch\_id}, \texttt{caller\_id}, and \texttt{signature}. This ensures that all reports contain the sufficient statistics required for AgentRank-UC (Section~3).  

Second, DOVIS mandates \emph{standardized units and scales}. Latency is measured in milliseconds; cost is expressed in normalized credits; and quality and risk are reported on the unit interval $[0,1]$. Each report also carries a \texttt{schema\_version} field to document which unit conventions apply. These requirements guarantee that indexers can safely aggregate and compare telemetry across agents.  

Third, a \emph{task taxonomy registry} provides unique identifiers for common task types (e.g., \texttt{task\_42} = ``translation: human languages''). New tasks can be added as the ecosystem evolves, but each must be versioned and documented. This prevents ambiguous or overloaded labels and ensures that per-task rankings remain interpretable.  

Finally, DOVIS supports \emph{schema versioning and backward compatibility}. Each record explicitly carries its schema version, and indexers implement normalization layers so that older reports remain usable. For example, if earlier records reported quality on a 1--5 scale, they can be rescaled to $[0,1]$ before being integrated with newer reports.  

\subsubsection*{Practical baseline}  
A practical baseline for semantics would require the OAT-Lite schema with canonical units and a schema version field as mandatory elements; the use of global task registry identifiers for per-task reports as strongly recommended; and optional richer metadata---such as language pairs or domains---in an extended OAT-Full specification.  

\medskip
\noindent\textbf{Summary.} Semantics ensure that minimal aggregates actually line up across agents. By standardizing schema fields, defining canonical units, and introducing versioned task identifiers, DOVIS guarantees that telemetry is interpretable in the present and robust to evolution in the future.

\subsection{The DOVIS Protocol}

The DOVIS protocol defines how agents in a marketplace (or open ecosystem) generate, exchange, and verify telemetry records so that discovery algorithms such as AgentRank-UC (Section~3) can be computed reliably. DOVIS has a layered architecture, with each layer addressing a specific operational challenge:

\begin{itemize}
    \item \textbf{Discovery.} Indexers aggregate telemetry reports and compute rankings.  
    \item \textbf{Orchestration.} Callers aggregate and emit telemetry at the close of each epoch, while callees may optionally publish acknowledgments.  
    \item \textbf{Verification.} Reports are digitally signed, can be cross-checked against acknowledgments, and are subject to probabilistic audits.  
    \item \textbf{Incentives.} Reporters are rewarded with exposure or credits, while misreporters face penalties and rank suppression.  
    \item \textbf{Semantics.} Telemetry fields, units, and task taxonomies are standardized through the OAT-Lite schema.  
\end{itemize}

Together, these layers form a minimal but extensible substrate for performance-aware discovery in the Agentic Web.  Our layering mirrors classical reference models and modern transport/telemetry stacks \cite{ISO7498,Clark1988,RFC9000,OpenTelemetry}.

\subsubsection*{Message types}  
DOVIS defines three minimal record types:  

\begin{itemize}
    \item \textbf{Caller Report (mandatory).}  
    A caller submits one record per callee--task pair in each epoch:
    \begin{verbatim}
    { epoch_id, caller_id, callee_id, task_id,
      n_calls, n_success,
      sum_quality, sum_latency, sum_cost, sum_risk,
      schema_version, signature }
    \end{verbatim}
    This record represents the caller-side aggregate snapshot of outcomes for the specified epoch and task type.  

    \item \textbf{Callee Acknowledgment (optional).}  
    A callee may submit an acknowledgment record of calls received:
    \begin{verbatim}
    { epoch_id, callee_id, task_id,
      n_calls_received,
      schema_version, signature }
    \end{verbatim}
    This record confirms the total volume of calls received, without reporting outcomes.  

    \item \textbf{Audit Request/Response (optional, indexer-initiated).}  
    Indexers may issue a request challenging a caller or callee to provide evidence, such as hashed call identifiers. The challenged agent responds with proof, and mismatches are penalized.  
\end{itemize}

\subsubsection*{Orchestration rules}  
\begin{itemize}
    \item \textbf{Epoch partitioning.} Time is divided into epochs (e.g., hourly or daily). Callers must submit exactly one Caller Report per nonzero $(callee, task)$ pair per epoch.  
    \item \textbf{Decayed aggregation.} Caller Reports must reflect exponentially decayed statistics, computed using the half-life parameter $H$, as of the close of the epoch.  
    \item \textbf{Idempotency.} If multiple submissions are made for the same $(caller, callee, task, epoch)$, the indexer retains only the latest record, ensuring last-write-wins semantics.  
    \item \textbf{Relays.} Reports may pass through relays for batching, anonymization, or privacy; such relays must preserve semantics.  
    \item \textbf{Grace window.} Indexers accept late submissions for up to two epochs beyond the reporting interval, after which reports are discarded.  
\end{itemize}

\subsubsection*{Verification rules}  
\begin{itemize}
    \item \textbf{Digital signatures.} All records must be signed using persistent keys, and indexers must verify signatures before ingestion.  
    \item \textbf{Cross-checking.} Caller Reports for \texttt{n\_calls} may be compared with optional Callee Acknowledgments; significant discrepancies trigger an audit.  
    \item \textbf{Audit regime.} In each epoch, the indexer randomly samples approximately 1--5\% of edges for deep audit, which may involve replaying hashed call identifiers or comparing caller and callee totals.  
    \item \textbf{Identity weighting.} Reports from stronger identities (such as staked, verified, or TEE-attested agents) carry greater weight in rankings, while unverifiable reporters are down-weighted.  
    \item \textbf{Penalties.} Failed audits result in penalties, including rank weight reduction, visibility suppression, or suspension.  
\end{itemize}

\subsubsection*{Incentive mechanisms}  
\begin{itemize}
    \item \textbf{Exposure rewards.} Agents who consistently report telemetry may receive modest boosts in visibility within discovery results.  
    \item \textbf{Penalties for dishonesty.} Misreporting detected through audits leads to slashing, including down-weighting of contributions, rank suppression, or exclusion from the marketplace.  
    \item \textbf{Telemetry credits (optional).} Marketplaces may implement a credit system in which valid telemetry submissions earn credits redeemable for reduced fees, higher placement, or other benefits.  
    \item \textbf{Cold-start fairness.} AgentRank-UC priors guarantee that new agents remain visible even before telemetry has accumulated, ensuring that early reporting is not futile.  
\end{itemize}

\subsubsection*{Semantics: OAT-Lite schema}  
\begin{itemize}
    \item \textbf{Mandatory fields.} All records must include the fields \texttt{n\_calls}, \texttt{n\_success}, \texttt{sum\_quality}, \texttt{sum\_latency}, \texttt{sum\_cost}, and \texttt{sum\_risk}.  
    \item \textbf{Canonical units.} Latency must be measured in milliseconds, cost expressed in normalized credits, and quality and risk normalized to the interval $[0,1]$.  
    \item \textbf{Task identifiers.} Each record must include a task identifier drawn from the global task taxonomy registry.  
    \item \textbf{Schema versioning.} Each record must carry a \texttt{schema\_version} field, and indexers must maintain backward compatibility across versions.  
\end{itemize}

\subsubsection*{Indexer workflow}  
At the close of each epoch, the indexer executes the following sequence:  

\begin{enumerate}
    \item It ingests Caller Reports (mandatory) and Callee Acknowledgments (optional).  
    \item It deduplicates records by the key $(caller, callee, task, epoch)$ using last-write-wins.  
    \item It verifies all signatures and evaluates audit responses.  
    \item It constructs usage and competence weights $U_{ij}$ and $C_{ij}$.  
    \item It normalizes these weights into row-stochastic kernels $P$ and $Q$, applying priors for empty rows.  
    \item It runs the AgentRank-UC algorithm to obtain the rank vector $r$.  
    \item It publishes the resulting rankings back to the ecosystem to support discovery.  
\end{enumerate}

\medskip
\noindent\textbf{Summary.} The DOVIS protocol specifies message formats, orchestration rules, verification policies, incentive mechanisms, semantic standards, and indexer workflow. Together, these rules form a minimal but concrete specification that can be implemented in practice, while remaining extensible as agent ecosystems mature.

\subsection{Deployment and Extensibility}

Although the DOVIS protocol is designed with internet-scale agent ecosystems in mind, its practical deployment will likely begin in more controlled environments. The most natural starting point is within \emph{agent marketplaces}, where incentives and verification are already aligned: the marketplace operator has both the means and the motivation to enforce reporting rules, run audits, and provide exposure rewards. Within these bounded settings, the costs of orchestration are manageable, and the benefits of telemetry are immediately visible in the quality of discovery services.  

Looking further ahead, \emph{federated deployment} across multiple marketplaces becomes possible once basic semantic agreements are established. Shared task registries and adherence to the OAT-Lite schema would allow telemetry to be combined across ecosystems, producing rankings that extend beyond the boundaries of any single platform. This step is more challenging, since no single operator controls incentives or verification globally, but the trajectory resembles the history of internet protocols: islands of adoption first, federation later.  Federated deployment across marketplaces will require convergence of schemas and registries, similar in spirit to efforts documented in recent agent-protocol surveys and specifications \cite{ehtesham2025survey,chang2025anpwhitepaper}.

Finally, DOVIS is deliberately designed to be \emph{minimal and extensible}. The OAT-Lite schema provides only the fields required for AgentRank-UC, but nothing prevents richer extensions. An ``OAT-Full'' profile could incorporate additional metrics such as energy consumption, fairness, or interpretability; stronger privacy mechanisms such as secure aggregation or differential privacy; and more sophisticated incentive mechanisms, such as tokenized credits or stake-based rewards. By layering these extensions on top of the minimal core, DOVIS can evolve as agent ecosystems mature, while remaining interoperable with the simple baseline defined here.  

\medskip
\noindent With orchestration, verification, incentives, and semantics in place, DOVIS establishes the minimal telemetry substrate required for performance-aware discovery. The next question is how to turn this stream of aggregated reports into a principled, competence-aware ranking of agents. In Section~3, we present \emph{AgentRank-UC}, an algorithm that combines usage and competence signals into a single ranking vector, ensuring that discovery reflects not only popularity but also demonstrated quality.

%%%%%%%%%%%%%%%%%%%%%%%%%%%%%%%%%%%%%%%%%%%%%%%%%%%%%
%%%%%%%%%%%%%%%%%%%%%%%%%%%%%%%%%%%%%%%%%%%%%%%%%%%%%
%%%%%%%%%%%%%%%%%%%%%%%%%%%%%%%%%%%%%%%%%%%%%%%%%%%%%
%%%%%%%%%%%%%%%%%%%%%%%%%%%%%%%%%%%%%%%%%%%%%%%%%%%%%
\section{Agent Rank Algorithm}

\subsection{Background \& Related Work}
PageRank \cite{BrinPage1998} introduced the now-classic idea of ranking entities by propagating importance through a graph. While originally applied to web pages and hyperlinks, the underlying principle — importance flows from well-connected, high-quality nodes — has since inspired numerous extensions, including TrustRank (trust propagation), Personalized PageRank (context biasing), and Weighted PageRank (metadata-aware link weights) \cite{Kleinberg1999,Haveliwala2002,Haveliwala2003,Gyongyi2004,LangvilleMeyer2006,Gleich2015}.

Modern search systems no longer rely on PageRank in its original form, instead combining many signals — semantic relevance, quality, freshness, and trust — into machine-learned ranking models. Nonetheless, PageRank remains a useful conceptual foundation for importance propagation in graphs.

In our setting, the same principle applies to agents in the Agentic Web. AgentRank-UC extends this idea by incorporating usage (how often an agent is called) and competence (success and quality of outcomes) into a unified ranking model, making discovery both capability-aware and performance-driven. The fusion can be seen as a multiplicative analogue of combining authority and popularity, akin in spirit to prior propagation models but explicitly competence-aware \cite{Kleinberg1999,Gyongyi2004,Gleich2015}.

\subsection{Minimal Problem Model and Essential Requirements}

We consider a setting with a finite set of agents $\mathcal{A} = \{1, \dots, n\}$ that operate on tasks belonging to a set of types $\mathcal{T}$. The primary data source for ranking is an interaction log that records calls between agents. Each record corresponds to an invocation from agent $i$ to agent $j$ on a task of type $k \in \mathcal{T}$ at time $t$, along with outcome and performance metadata. Specifically, the record contains: a binary success indicator $z \in \{0,1\}$; a quality score $q \in [0,1]$ reflecting graded performance; a latency $\ell > 0$ in time units; a cost $c \ge 0$ representing resource consumption; and a risk or safety score $r \in [0,1]$ capturing the likelihood or severity of harmful outcomes.

To ensure that more recent evidence is weighted more heavily than stale observations, we introduce a recency kernel $\omega(t) = e^{-\lambda (T - t)}$, where $T$ denotes the current time and $\lambda > 0$ is a decay parameter. Using this weighting, we can compute time-decayed aggregates such as the total number of calls from $i$ to $j$ of type $k$, denoted $N_{ij}^{(k)} = \sum \omega(t)$ over relevant records, and time-decayed estimates of success probabilities $\widehat{p}_{ij}^{(k)}$ using smoothed estimators such as Beta--Bernoulli posteriors. Similar aggregates are maintained for latency, cost, risk, and quality to support a multi-faceted assessment of agent performance.

An effective agent ranking algorithm must first integrate two fundamentally different kinds of evidence: how frequently an agent is relied upon by others (usage or influence) and how well the agent performs when called upon (competence or quality). Both aspects are essential; usage alone may overvalue popular but unreliable agents, while competence alone may undervalue highly capable agents that are not yet widely known.

Second, the ranking formulation must be mathematically well-posed, with a unique solution that can be computed efficiently at scale. This typically requires constructing the algorithm so that it can be expressed as a contraction mapping, enabling convergence to a fixed point through iterative methods.

Third, the algorithm must satisfy certain operational constraints to ensure robustness and fairness. In particular, it should incorporate recency so that current performance influences rankings more strongly than outdated behavior; it should be monotonic in the sense that improving an agent’s observed outcomes cannot reduce its score; and it should accommodate cold-start agents through the use of informative priors, preventing them from being assigned negligible scores solely due to lack of historical data.

To meet these requirements, it is natural to represent the agent ecosystem as two distinct weighted directed graphs. The first encodes \emph{usage patterns}, where the weight on an edge from $i$ to $j$ reflects how often $i$ invokes $j$, adjusted for recency. The second encodes \emph{competence patterns}, where edge weights are modulated by the observed success rates and quality metrics associated with those invocations, again with recency adjustments and possible penalties for high latency, high cost, or safety concerns.

From these graphs we derive two row-stochastic matrices:
\begin{itemize}
    \item $P$, the \emph{usage kernel}, whose entry $P_{ij}$ represents the normalized propensity of calls from agent $i$ to agent $j$ in the usage graph.
    \item $Q$, the \emph{competence kernel}, whose entry $Q_{ij}$ represents the normalized strength of competence signals from agent $i$ to agent $j$ in the competence graph.
\end{itemize}

In addition, we define two probability vectors over agents:
\begin{itemize}
    \item $v$, the \emph{usage prior}, which distributes a baseline amount of usage ``mass'' independent of observed interactions. This prevents agents with few or no observations from being ranked at zero and can be uniform or biased toward agents with trusted external credentials.
    \item $w$, the \emph{competence prior}, which distributes a baseline amount of competence ``mass'' independent of observed usage, possibly informed by offline testing or certification.
\end{itemize}

The separation into $P$ and $Q$ allows the algorithm to propagate influence and quality independently before combining them. The inclusion of priors $v$ and $w$ ensures that rankings remain well-defined for all agents and provides a mechanism for incorporating external knowledge into the ranking process.

The ranking process is formulated as two coupled fixed-point problems. The \emph{usage rank} vector $x$ satisfies
\[
x = \alpha P^\top x + (1 - \alpha)v,
\]
which can be interpreted as a random walk where, with probability $\alpha$, a ``usage signal'' follows observed interaction patterns via $P$, and with probability $1 - \alpha$, it ``teleports'' to a baseline distribution $v$.

Similarly, the \emph{competence rank} vector $y$ satisfies
\[
y = \beta Q^\top y + (1 - \beta)w,
\]
meaning that, with probability $\beta$, competence propagates along observed competence-weighted edges in $Q$, and with probability $1 - \beta$, it resets to the baseline $w$.

The final \emph{AgentRank-UC} score combines usage and competence into a single measure:
\[
r = \mathrm{normalize}\left( x^{p} \odot y^{\,1-p} \right), \quad p \in [0,1],
\]
where $p$ controls the trade-off between usage and competence. This formulation ensures that both forms of evidence are incorporated, that the system has a unique stationary solution under mild conditions, and that the resulting ranking is responsive to current performance while robust to sparse data.

\subsection{AgentRank-UC: Minimal Algorithm}

\textbf{Inputs and outputs.} The algorithm takes as input an interaction log of records $(i \!\to\! j, k, t; z, q, \ell, c, r)$ as defined in the problem model; hyperparameters $\lambda>0$ (time decay), $\alpha_0,\beta_0>0$ (success smoothing), $\alpha,\beta \in (0,1)$ (teleport weights), balance $p \in [0,1]$, and priors $v,w \in \Delta^{n-1}$ (uniform or informed). The output is a rank vector $r \in \Delta^{n-1}$ over agents; optionally per-task ranks $r^{(k)}$.

\paragraph{Step 1 --- Recency weighting and aggregation.}  
For each ordered pair $(i,j)$ and task type $k$, compute the decayed count 
\[
N_{ij}^{(k)} = \sum \omega(t) \quad \text{with} \quad \omega(t) = e^{-\lambda (T - t)},
\]
the decayed successes $S_{ij}^{(k)} = \sum \omega(t) z$, and decayed means $\bar{q}_{ij}^{(k)}, \bar{\ell}_{ij}^{(k)}, \bar{c}_{ij}^{(k)}, \bar{r}_{ij}^{(k)}$.  
\emph{Motivation.} Exponential decay ensures that current performance influences the model more than stale behavior while allowing a single linear pass over the log to build sufficient statistics.

\paragraph{Step 2 --- Smoothed success posteriors.}  
Estimate success on $(i,j,k)$ with a Beta--Bernoulli posterior
\[
\widehat{p}_{ij}^{(k)} = \frac{\alpha_0 + S_{ij}^{(k)}}{\alpha_0 + \beta_0 + N_{ij}^{(k)}}.
\]
\emph{Motivation.} Conjugate smoothing produces calibrated probabilities in sparse regimes and guarantees monotonic improvements as observed successes increase.

\paragraph{Step 3 --- Build usage and competence edge weights.}  
Define usage weights $U_{ij} = \sum_k N_{ij}^{(k)}$. For competence, compute a task-level utility
\[
u_{ij}^{(k)} = \theta_1 \,\mathrm{logit}(\widehat{p}_{ij}^{(k)}) - \theta_2 \log(1+\bar{\ell}_{ij}^{(k)}) - \theta_3 \log(1+\bar{c}_{ij}^{(k)}) - \theta_4 \,\bar{r}_{ij}^{(k)} + \theta_5 \,\bar{q}_{ij}^{(k)},
\]
and aggregate via 
\[
C_{ij} = \sum_k N_{ij}^{(k)} \,\phi\big(u_{ij}^{(k)}\big), \quad \phi(x) = \log(1 + e^x) \ \text{(softplus)}.
\]
\emph{Motivation.} The usage weights capture who relies on whom after recency adjustment, while the competence weights reward successful, high-quality outcomes and softly penalize cost, latency, and risk, with softplus ensuring positivity, smoothness, and diminishing returns so that volume alone cannot dominate.

\paragraph{Step 4 --- Row-normalize to kernels $P$ and $Q$.}  
For each source $i$, set $P_{ij} = U_{ij}/\sum_{j'} U_{ij'}$ if the row has mass and otherwise $P_{ij} = v_j$; analogously set $Q_{ij} = C_{ij}/\sum_{j'} C_{ij'}$ or $Q_{ij} = w_j$.  
\emph{Motivation.} Row normalization produces valid Markov kernels while the backoff to priors prevents dangling or empty rows from breaking the stochastic process.

\paragraph{Step 5 --- Choose priors $v$ and $w$.}  
Let $v$ be a baseline usage distribution (uniform or biased toward verified/attested agents) and $w$ be a baseline competence distribution (uniform or informed by external benchmarks or certifications).  
\emph{Motivation.} Priors provide cold-start fairness, encode external evidence, and ensure ergodicity of the subsequent fixed-point iterations.

\paragraph{Step 6 --- Solve usage and competence fixed points.}  
Iterate
\[
x^{(t+1)} = \alpha P^\top x^{(t)} + (1 - \alpha)v, \qquad
y^{(t+1)} = \beta Q^\top y^{(t)} + (1 - \beta)w
\]
from $x^{(0)} = v$ and $y^{(0)} = w$ until $\|x^{(t+1)} - x^{(t)}\|_1$ and $\|y^{(t+1)} - y^{(t)}\|_1$ fall below a tolerance.  
\emph{Motivation.} Teleportation yields contraction mappings with unique fixed points and linear convergence, while separating usage and competence preserves interpretability and control.

\paragraph{Step 7 --- Combine usage and competence into AgentRank-UC.}  
Set 
\[
r = \mathrm{normalize}\big(x^{\,p} \odot y^{\,1-p}\big), \quad p \in [0,1].
\]
\emph{Motivation.} The geometric fusion penalizes imbalance (popular-but-poor or excellent-but-undiscovered), is scale-invariant, and exposes a single, interpretable trade-off parameter.

\medskip
\noindent\textbf{Design, complexity, and implementation.} Modeling usage and competence with separate kernels prevents conflation of popularity and performance, Beta--Bernoulli smoothing stabilizes sparse edges, softplus aggregation limits dominance by raw volume, and teleportation with priors guarantees existence and uniqueness. Building the sufficient statistics is linear in the number of log events, each power iteration is proportional to the number of nonzero edges in $P$ and $Q$, and a modest number of iterations typically suffices. In practice, apply small numerical floors when normalizing rows, cap heavy-tailed latency/cost via $\log(1+x)$, and, if desired, compute per-task ranks by repeating the procedure on task-filtered logs and aggregating them for a global score.

\subsection{Deployment and Use of AgentRank-UC}

AgentRank-UC is designed to operate at the level of a marketplace or ecosystem indexer. At the close of each epoch, the indexer ingests the OAT-Lite telemetry described in Section~2, constructs the usage and competence kernels $P$ and $Q$, and computes fixed-point solutions for the usage vector $x$ and competence vector $y$. The combined score
\[
r^{(k)} = \mathrm{normalize}\!\left(x^{\,p} \odot y^{\,1-p}\right)
\]
is produced separately for each task type $k$, yielding a set of per-task rankings that are disseminated to all participating agents.  

The rank vectors are not merely descriptive. They serve as inputs to the \emph{selection rule} that callers apply when choosing among potential callees. In practice, a caller sampling a callee $j$ for task $k$ selects according to a probability of the form
\[
\Pr(i \to j \mid k) \;\propto\; (1-\rho-\gamma)\,\text{popularity}_j \;+\; \rho\,\widehat{\theta}_j(k) \;+\; \gamma\,r^{(k)}_j \;+\; \epsilon,
\]
where $\text{popularity}_j$ denotes a baseline popularity prior, $\widehat{\theta}_j(k)$ is the caller’s noisy estimate of competence, and $r^{(k)}_j$ is the disseminated AgentRank-UC score. The coefficients $\rho$ and $\gamma$ weight the influence of competence proxies and rank, while $\epsilon$ introduces a small exploration probability.  

This design ensures that AgentRank-UC directly shapes the flow of calls in the ecosystem: high-ranked agents are more likely to be selected, while exploration and priors preserve fairness and cold-start opportunities. Over successive epochs, telemetry and rank co-evolve, producing a feedback loop in which competence-aware discovery improves system utility while remaining robust to shocks and manipulation.

%%%%%%%%%%%%%%%%%%%%%%%%%%%%%%%%%%%%%%%%%%%%%%%%%%%%%
%%%%%%%%%%%%%%%%%%%%%%%%%%%%%%%%%%%%%%%%%%%%%%%%%%%%%
%%%%%%%%%%%%%%%%%%%%%%%%%%%%%%%%%%%%%%%%%%%%%%%%%%%%%
%%%%%%%%%%%%%%%%%%%%%%%%%%%%%%%%%%%%%%%%%%%%%%%%%%%%%
\section{Theoretical Guarantees for AgentRank-UC}

\paragraph{Overview.}
We establish five core guarantees for AgentRank-UC: (i) existence, uniqueness, and convergence of the usage and competence fixed points; (ii) well-posedness and continuity of the geometric fusion; (iii) monotonicity with respect to improved outcomes; (iv) cold-start fairness under positive priors; and (v) stability under recency decay and small perturbations. We also include an illustrative result on Sybil non-amplification.

\subsection{Preliminaries and Standing Assumptions}
Let $P,Q\in\mathbb{R}^{n\times n}$ be row-stochastic kernels constructed as in Section~3 from time-decayed sufficient statistics. Let $\alpha,\beta\in(0,1)$ and strictly positive priors $v,w\in\Delta^{n-1}$. Define the PageRank-style operators
\[
T_P(x) \coloneqq \alpha P^\top x + (1-\alpha) v,\qquad
T_Q(y) \coloneqq \beta Q^\top y + (1-\beta) w.
\]
Utilities $u_{ij}^{(k)}$ are isotone: nondecreasing in success/quality and nonincreasing in latency/cost/risk; $\phi$ (softplus) is increasing. The final score is the geometric fusion
\[
r \;=\; \mathrm{normalize}\!\big(x^{\,p}\odot y^{\,1-p}\big),\qquad p\in[0,1].
\]

\subsection{Existence, Uniqueness, and Convergence}
\begin{theorem}[Existence, uniqueness, and convergence of AgentRank-UC fixed points]\label{thm:agent-fp}
In AgentRank-UC, the usage vector $x$ and competence vector $y$ are defined by the PageRank-style equations
\[
x \;=\; \alpha P^\top x + (1-\alpha)v, \qquad
y \;=\; \beta Q^\top y + (1-\beta)w,
\]
where $P,Q$ are row-stochastic kernels constructed from time-decayed telemetry, $\alpha,\beta \in (0,1)$ are teleport weights, and $v,w\in \Delta^{n-1}$ are strictly positive priors. Then:
\begin{enumerate}\itemsep0.2em
\item (\emph{Existence and uniqueness}) Each equation admits a unique solution $x^\star,y^\star \in \Delta^{n-1}$.
\item (\emph{Convergence}) Starting from any initialization $x^{(0)},y^{(0)}\in\Delta^{n-1}$, the power iterations
\[
x^{(t+1)} = \alpha P^\top x^{(t)} + (1-\alpha)v,\qquad
y^{(t+1)} = \beta Q^\top y^{(t)} + (1-\beta)w
\]
converge linearly to $x^\star$ and $y^\star$.
\end{enumerate}
\end{theorem}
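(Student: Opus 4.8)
The plan is to treat each fixed-point equation independently and invoke the Banach fixed-point theorem on the probability simplex, exactly as in the classical analysis of the Google matrix. First I would verify that the affine operator $T_P(x) = \alpha P^\top x + (1-\alpha)v$ maps $\Delta^{n-1}$ into itself: for $x \in \Delta^{n-1}$ the vector $P^\top x$ has nonnegative entries and sums to $\sum_i \sum_j P_{ji} x_j = \sum_j x_j \sum_i P_{ji} = 1$, since each row of $P$ sums to one, so $P^\top x \in \Delta^{n-1}$; then $T_P(x)$ is a convex combination of $P^\top x$ and $v$, hence lies in $\Delta^{n-1}$. The same reasoning applies verbatim to $T_Q$ with $\beta$ and $w$.

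Next I would establish the contraction estimate in the $\ell_1$ norm. For any $d \in \mathbb{R}^n$ one has $\|P^\top d\|_1 = \sum_i |\sum_j P_{ji} d_j| \le \sum_j |d_j| \sum_i P_{ji} = \|d\|_1$, so $P^\top$ is an $\ell_1$ non-expansion; note this uses only row-stochasticity of $P$ (equivalently, that $P^\top$ is a Markov operator on distributions), not irreducibility or aperiodicity. Applying this with $d = x - x'$ gives $\|T_P(x) - T_P(x')\|_1 = \alpha\,\|P^\top(x - x')\|_1 \le \alpha\,\|x - x'\|_1$, so $T_P$ is a contraction of modulus $\alpha \in (0,1)$ on $(\Delta^{n-1}, \|\cdot\|_1)$; likewise $T_Q$ has contraction modulus $\beta$. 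The teleport weights being strictly below $1$ is precisely what upgrades the non-expansion to a strict contraction.

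Since $\Delta^{n-1}$ is closed in $\mathbb{R}^n$, hence complete under $\|\cdot\|_1$, the Banach fixed-point theorem yields a unique $x^\star \in \Delta^{n-1}$ with $x^\star = T_P(x^\star)$, and for any $x^{(0)} \in \Delta^{n-1}$ the iterates satisfy $\|x^{(t)} - x^\star\|_1 \le \alpha^t\,\|x^{(0)} - x^\star\|_1$, i.e.\ linear (geometric) convergence at rate $\alpha$; the identical argument gives $y^\star$ and rate $\beta$. As a byproduct, $x^\star = \alpha P^\top x^\star + (1-\alpha)v \ge (1-\alpha)v > 0$, so each fixed point is strictly positive, which will be convenient for the monotonicity and fusion results that follow.

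Frankly there is no serious obstacle here: the statement is the standard PageRank well-posedness fact, and the only points requiring care are stating the $\ell_1$ non-expansiveness of $P^\top$ correctly and making explicit that $\alpha,\beta<1$ is what forces strict contraction. If a spectral-radius phrasing were preferred, I would instead observe that $\rho(\alpha P^\top) = \alpha\,\rho(P^\top) = \alpha < 1$, so $I - \alpha P^\top$ is invertible and $x^\star = (1-\alpha)(I - \alpha P^\top)^{-1} v$; but the contraction argument already delivers uniqueness and the convergence rate together, so I would present that version.
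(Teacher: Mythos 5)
Your proposal is correct and follows essentially the same route as the paper's proof: verify that $T_P$ maps $\Delta^{n-1}$ to itself, establish $\ell_1$ non-expansiveness of $P^\top$ from row-stochasticity to get a strict contraction with modulus $\alpha$, and invoke the Banach fixed-point theorem on the complete space $(\Delta^{n-1},\|\cdot\|_1)$. Your closing remarks on strict positivity of the fixed point and the Neumann-series/spectral-radius alternative also mirror observations the paper records separately as remarks.
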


\begin{proof}
\emph{Proof overview.} We show that each update map in AgentRank-UC is a contraction on the simplex with modulus strictly less than one. By the Banach fixed-point theorem, a unique fixed point exists and the power iteration converges at a linear rate. We carry out the argument first for the usage vector $x$; the competence vector $y$ follows identically. See standard treatments of PageRank-style affine contractions and Markov chain analysis for background \cite{LangvilleMeyer2006,Levin2017,HornJohnson2013}.

\smallskip
\emph{Step 1 (Well-posedness).} For any $x\in\Delta^{n-1}$, $P^\top x$ is nonnegative and sums to one because $P$ is row-stochastic. Thus $T(x) = \alpha P^\top x + (1-\alpha)v$ is a convex combination of two simplex vectors, hence also in $\Delta^{n-1}$. Strict positivity of $v$ implies $T(x)>0$.

\smallskip
\emph{Step 2 (Contraction property).} For any $x,x'\in\mathbb{R}^n$,
\[
T(x)-T(x') = \alpha P^\top(x-x').
\]
By nonexpansiveness of column-stochastic maps in $\ell_1$, $\|P^\top(x-x')\|_1 \le \|x-x'\|_1$. Therefore
\[
\|T(x)-T(x')\|_1 \;\le\; \alpha \|x-x'\|_1,
\]
with strict contraction constant $\alpha<1$.

\smallskip
\emph{Step 3 (Existence, uniqueness, and convergence).} Since $(\Delta^{n-1},\|\cdot\|_1)$ is a complete metric space and $T$ is a strict contraction, Banach’s fixed-point theorem guarantees a unique solution $x^\star\in\Delta^{n-1}$ with $T(x^\star)=x^\star$. Moreover, starting from any $x^{(0)}$, the iterates satisfy
\[
\|x^{(t)}-x^\star\|_1 \;\le\; \alpha^t \|x^{(0)}-x^\star\|_1,
\]
establishing linear convergence. The argument for $y$ is identical with parameters $(Q,\beta,w)$.
\end{proof}

\begin{remark}[Closed form and Neumann series]
The fixed point can be written explicitly as
\[
x^\star = (1-\alpha)(I-\alpha P^\top)^{-1}v
\;=\; (1-\alpha)\sum_{t=0}^\infty \alpha^t (P^\top)^t v,
\]
with analogous expression for $y^\star$. The Neumann series converges absolutely since $\rho(\alpha P^\top)\le \alpha<1$.
\end{remark}

\begin{remark}[Iteration complexity]
From the error bound $\|x^{(t)}-x^\star\|_1 \le \alpha^t\|x^{(0)}-x^\star\|_1$, any target accuracy $\varepsilon>0$ is achieved after
\[
t \;\ge\; \frac{\ln(\varepsilon/\|x^{(0)}-x^\star\|_1)}{\ln \alpha}
\]
iterations (with $\ln\alpha<0$). Thus the cost is $O(\log 1/\varepsilon)$ iterations, each dominated by a sparse matrix–vector multiply.
\end{remark}

\subsection{Well-Posed Fusion and Continuity}
\begin{theorem}[Well-posedness and continuity of the AgentRank-UC fusion]\label{thm:fusion}
Given usage and competence fixed points $x^\star,y^\star \in \Delta^{n-1}$ with strictly positive coordinates, and a balance parameter $p\in[0,1]$, the fused AgentRank-UC score
\[
r \;=\; \mathrm{normalize}\!\left( (x^\star)^{p} \odot (y^\star)^{\,1-p} \right)
\]
is a well-defined probability vector in $\Delta^{n-1}$. Moreover, $r$ depends continuously on $(x^\star,y^\star)$ and is locally Lipschitz on any compact subset of the strictly positive orthant.
\end{theorem}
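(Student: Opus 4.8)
The plan is to factor the fusion map through two elementary pieces — the coordinatewise weighted geometric mean $g:(x,y)\mapsto \big(x_i^{\,p}\,y_i^{\,1-p}\big)_{i=1}^{n}$ followed by the $\ell_1$-normalization $N:z\mapsto z/\sum_j z_j$ — and to check well-posedness, continuity, and the Lipschitz bound for each piece, then assemble $r=N\circ g$ by composition. Throughout, the single structural fact that makes everything work is strict positivity of $x^\star,y^\star$, which (a) makes real exponentiation $x_i^{\,p}=\exp(p\ln x_i)$ well-defined for every $p\in[0,1]$ and (b) keeps the normalizing constant bounded away from zero.

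First, for well-posedness: since each coordinate is strictly positive, the factor $x_i^{\,p}y_i^{\,1-p}=\exp\!\big(p\ln x_i+(1-p)\ln y_i\big)$ is a well-defined strictly positive real for all $p\in[0,1]$ (including the endpoints, where it reduces to $y_i$ or $x_i$). Hence $g(x^\star,y^\star)$ lies in the open positive orthant, $Z=\sum_i x_i^{\star\,p}y_i^{\star\,1-p}>0$, and $r=g(x^\star,y^\star)/Z$ is strictly positive with unit $\ell_1$-norm, i.e.\ an element of the relative interior of $\Delta^{n-1}$. For continuity, each component of $g$ is a composition of the continuous maps $\ln$, the affine map $(a,b)\mapsto pa+(1-p)b$, and $\exp$, so $g$ is continuous on $(0,\infty)^{2n}$; and $N$ is continuous wherever $\sum_j z_j\neq 0$, which holds on the image of $g$. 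Continuity of $r=N\circ g$ then follows from continuity of compositions.

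Third, for the local Lipschitz estimate, fix a compact set $K$ inside the strictly positive orthant, so that all coordinates are confined to some box $[m,M]^{2n}$ with $0<m\le M<\infty$. On this range each scalar map $(a,b)\mapsto a^{\,p}b^{\,1-p}$ is $C^1$ with partial derivatives $p\,a^{p-1}b^{1-p}$ and $(1-p)\,a^{p}b^{-p}$, which are continuous and therefore bounded on $[m,M]^2$; by the mean value inequality over the convex box, $g$ is Lipschitz there with a constant depending only on $m,M,p,n$. The image $g(K)$ is compact with each coordinate in $[m,M]$, so on $g(K)$ the normalization $N$ is $C^1$ with Jacobian entries $\partial N_i/\partial z_j=\delta_{ij}/S-z_i/S^2$ where $S=\sum_j z_j\ge nm>0$, hence of bounded operator norm; so $N$ is Lipschitz on $g(K)$. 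Composing the two bounds shows $r$ is Lipschitz on $K$.

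The argument is essentially routine; the only point needing a little care is the passage from "bounded gradient on a compact box" to an honest Lipschitz constant. I would handle this by first enlarging $K$ to a compact \emph{convex} box $[m,M]^{2n}$ on which the mean value inequality applies directly, and noting that on the open positive orthant each component function is $C^1$, hence locally Lipschitz, so its restriction to any compact set is Lipschitz. The endpoint cases $p\in\{0,1\}$ require only a one-line check, since then one exponent vanishes and the formulas above degenerate to the identity in the other variable, which is trivially smooth and Lipschitz.
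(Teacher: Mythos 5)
Your proposal is correct and follows essentially the same route as the paper's proof: strict positivity of $x^\star,y^\star$ gives well-definedness and membership in $\Delta^{n-1}$, continuity follows from continuity of coordinatewise powers, products, and division by a positive sum, and local Lipschitzness comes from bounding the Jacobian on a compact subset of the positive orthant. Your explicit factorization into $N\circ g$ and the remark about passing to a convex box before applying the mean value inequality are just a more careful write-up of the same argument.
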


\begin{proof}
\emph{Proof overview.} We show that the geometric fusion is always positive, can be normalized to sum to one, and varies smoothly with its inputs. The argument uses only basic properties of the strictly positive orthant. Continuity and Lipschitz arguments follow standard recipes in analysis on positive orthants \cite{HornJohnson2013}.

\smallskip
\emph{Step 1 (Positivity and normalization).} Since $x^\star,y^\star>0$ and $p\in[0,1]$, each coordinate of the unnormalized vector $z=(x^\star)^p \odot (y^\star)^{\,1-p}$ is strictly positive. Normalization
\[
r_i = \frac{z_i}{\sum_j z_j}
\]
yields $r\in\Delta^{n-1}$. Thus the fusion is always well-defined.

\smallskip
\emph{Step 2 (Continuity).} The map $(x,y)\mapsto x^p \odot y^{\,1-p}$ is continuous on the positive orthant because coordinatewise exponentiation and multiplication are continuous. Division by the strictly positive sum $\sum_j z_j$ is also continuous. Hence $r$ is continuous in $(x,y)$.

\smallskip
\emph{Step 3 (Local Lipschitzness).} On any compact subset of the strictly positive orthant, all coordinates of $x,y$ are bounded away from zero and infinity. The Jacobian of $(x,y)\mapsto r$ is therefore bounded on this set, yielding a local Lipschitz constant. This ensures perturbations in $x,y$ translate to proportionally bounded changes in $r$.
\end{proof}

\begin{remark}[Perturbation stability]
Combining Theorem~\ref{thm:agent-fp} and Theorem~\ref{thm:fusion}, if $P,Q$ are perturbed by $\Delta P,\Delta Q$ of $\ell_1$ norm at most $\varepsilon_P,\varepsilon_Q$, the resulting fused vector $r$ changes by $O(\varepsilon_P+\varepsilon_Q)$. Thus AgentRank-UC is stable under small changes in input telemetry.
\end{remark}

\begin{remark}[Geometric mean vs.\ alternatives]
The multiplicative fusion penalizes imbalance: an agent that is highly popular but incompetent (large $x^\star$ but small $y^\star$) or the reverse will have a suppressed fused score. This contrasts with arithmetic averaging, which can over-reward one-sided strength. The continuity and Lipschitz properties established above ensure this penalty is applied smoothly.
\end{remark}

\subsection{Monotonicity with Respect to Improved Outcomes}
\begin{theorem}[Monotonicity of AgentRank-UC with respect to improved outcomes]\label{thm:monotonicity}
In AgentRank-UC, let the sufficient statistics on edge $(i,j,k)$ be aggregated from time-decayed interactions: successes $S_{ij}^{(k)}$, qualities $\bar q_{ij}^{(k)}$, latencies $\bar \ell_{ij}^{(k)}$, costs $\bar c_{ij}^{(k)}$, and risks $\bar r_{ij}^{(k)}$. Let competence weights be
\[
C_{ij} \;=\; \sum_{k} N_{ij}^{(k)}\, \phi\!\big(u_{ij}^{(k)}\big),
\qquad
u_{ij}^{(k)} = \theta_1 \,\mathrm{logit}\,\widehat{p}_{ij}^{(k)} - \theta_2 \log(1+\bar\ell_{ij}^{(k)}) - \theta_3 \log(1+\bar c_{ij}^{(k)}) - \theta_4 \bar r_{ij}^{(k)} + \theta_5 \bar q_{ij}^{(k)},
\]
where $\phi=\log(1+e^x)$ is the softplus function and $\widehat{p}_{ij}^{(k)}$ is the smoothed success estimate. Then increasing the success rate or quality on $(i,j,k)$, or decreasing its latency, cost, or risk, weakly increases the final fused rank $r_j$ of agent $j$.
\end{theorem}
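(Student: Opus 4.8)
\emph{Proof proposal.} The plan is to propagate weak monotonicity through the AgentRank-UC pipeline one stage at a time, reducing each stage to a monotone map and isolating the single genuinely delicate step, which concerns how the competence stationary vector reacts to a reweighting of one source row. First I would argue that improving the outcome statistics on the edge $(i,j,k)$ --- holding the decayed counts $N_{ij}^{(k)}$ fixed --- can only raise the competence weight $C_{ij}$ while changing nothing on the usage side. Indeed, $\widehat p_{ij}^{(k)}=(\alpha_0+S_{ij}^{(k)})/(\alpha_0+\beta_0+N_{ij}^{(k)})$ increases in $S_{ij}^{(k)}$; the utility $u_{ij}^{(k)}$ is isotone --- nondecreasing in $\widehat p_{ij}^{(k)}$ and $\bar q_{ij}^{(k)}$, nonincreasing in $\bar\ell_{ij}^{(k)},\bar c_{ij}^{(k)},\bar r_{ij}^{(k)}$ --- using $\theta_\bullet\ge0$ and the monotonicity of $\mathrm{logit}$ and $\log(1+\cdot)$; softplus $\phi$ is increasing; and $C_{ij}=\sum_k N_{ij}^{(k)}\phi(u_{ij}^{(k)})$ is a nonnegative combination. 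Since $U$ and hence $P$ are untouched, the usage fixed point $x^\star$ is unchanged (and $x^\star>0$ by Theorem~\ref{thm:agent-fp}). Row-normalizing $C$, the only change to $Q$ is in row $i$, where $Q_{ij}$ weakly increases and every $Q_{im'}$ with $m'\ne j$ weakly decreases; all other rows are unchanged. (If every $N_{ij}^{(k)}=0$ nothing changes and there is nothing to prove, so assume $C_{ij}$ strictly increases.)

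The core step is the following claim about the competence fixed points $y,y'\in\Delta^{n-1}$ before and after this reweighting (both strictly positive by Theorem~\ref{thm:agent-fp} and the strict positivity of $w$): either $y'=y$, or the maximal relative gain $\rho^\star\coloneqq\max_m y'_m/y_m$ is attained at $m=j$. I would prove this by contradiction. Suppose the maximum is attained at some $m^\star\ne j$. Because column $m^\star$ of $Q$ only weakly decreased, $Q'_{\ell m^\star}\le Q_{\ell m^\star}$ for every $\ell$; plugging the fixed-point identity $y'_{m^\star}=\beta\sum_\ell Q'_{\ell m^\star}y'_\ell+(1-\beta)w_{m^\star}$ together with $y'_\ell\le\rho^\star y_\ell$ and the matching identity for $y$ yields $\rho^\star y_{m^\star}\le\rho^\star\big(y_{m^\star}-(1-\beta)w_{m^\star}\big)+(1-\beta)w_{m^\star}$, i.e.\ $(1-\beta)w_{m^\star}(1-\rho^\star)\ge0$, so $\rho^\star\le1$; but then $y'_m\le y_m$ for all $m$, and since both vectors lie in $\Delta^{n-1}$ this forces $y'=y$. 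Hence in the non-degenerate case $y'_j/y_j=\max_m y'_m/y_m$.

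Finally I would transfer this to the fused score. With $x^\star$ unchanged and $x_j^\star>0$, cross-multiplying the two normalized quotients and cancelling the common positive factor $(x_j^\star)^p$ shows that $r'_j\ge r_j$ is equivalent to $\sum_m (x_m^\star)^p (y_j y_m)^{1-p}\big[(y'_j/y_j)^{1-p}-(y'_m/y_m)^{1-p}\big]\ge0$. For $p\in[0,1)$ the map $t\mapsto t^{1-p}$ is increasing, so by the claim every bracket is nonnegative and every coefficient is nonnegative, giving $r'_j\ge r_j$; the boundary case $p=1$ is immediate since then $r=\mathrm{normalize}(x^\star)$ does not depend on $y$ at all. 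Combined with the opening reduction (which already covers the $N_{ij}^{(k)}=0$ and $y'=y$ degeneracies as equalities), this establishes weak monotonicity for each of the five outcome statistics.

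I expect the claim in the second paragraph to be the main obstacle. The naive chain ``$C_{ij}$ up $\Rightarrow$ $Q_{ij}$ up $\Rightarrow$ $y_j$ up $\Rightarrow$ $r_j$ up'' is insufficient: reweighting row $i$ also moves the other coordinates of $y$ --- some nodes downstream of $j$ may even gain --- and since $r_j$ divides by $\sum_m (x_m^\star)^p y_m^{1-p}$, a large relative gain at a usage-heavy node could in principle outweigh the gain at $j$. Ruling this out is exactly what the maximal-relative-gain argument does, and its only input beyond the fixed-point equations is the strict positivity of the competence prior $w$ --- the same ingredient that secured uniqueness in Theorem~\ref{thm:agent-fp}.
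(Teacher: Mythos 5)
Your proof is correct, and it takes a genuinely different --- and more rigorous --- route than the paper's. The paper's proof follows exactly the naive chain you flag as insufficient: it asserts that $T_Q$ is monotone on the positive cone "hence" $y^\star$ is monotone in each entry of $Q$ (ignoring that row-normalization simultaneously decreases $Q_{im'}$ for $m'\neq j$, so the perturbation is not a coordinatewise increase of $Q$), and then asserts that the normalized fusion is "coordinatewise increasing in $y^\star$" so $r_j$ rises when $y^\star_j$ does (ignoring that $r_j$ is \emph{decreasing} in every $y^\star_m$ with $m\neq j$ through the normalization denominator, and that other coordinates of $y^\star$ do move). Your two additions close precisely these gaps: the maximal-relative-gain lemma (a discrete maximum principle driven by the strictly positive teleportation term $(1-\beta)w$, showing $y'_j/y_j=\max_m y'_m/y_m$ unless $y'=y$) handles the row-redistribution of $Q$, and the cross-multiplication identity reduces $r'_j\ge r_j$ to nonnegativity of the brackets $\big[(y'_j/y_j)^{1-p}-(y'_m/y_m)^{1-p}\big]$, which that lemma supplies. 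As a corollary your lemma also recovers the paper's intermediate claim $y^\star_j$ weakly increases (since all ratios bounded by $y'_j/y_j<1$ would contradict $\sum_m y'_m=\sum_m y_m=1$). The only presentational quibble is that the statement of the theorem does not explicitly assume $\theta_1,\dots,\theta_5\ge 0$, which both you and the paper need for the utility-isotonicity step; the paper's standing assumptions in Section 4.1 declare the utilities isotone, so you may simply cite that. In short: same first and last stages as the paper, but the middle stage is where the real content lies, and yours is the complete argument.
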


\begin{proof}
\emph{Proof overview.} We show monotonicity in three stages: (i) edge-level utility $u_{ij}^{(k)}$ is monotone in each outcome variable, (ii) competence weights $C_{ij}$ and the fixed point $y^\star$ are monotone in $u$, and (iii) the fusion $r$ is monotone in $y$.

\smallskip
\emph{Step 1 (Utility monotonicity).} The logit of the smoothed success rate $\widehat{p}_{ij}^{(k)}$ is nondecreasing in $S_{ij}^{(k)}$; $\bar q_{ij}^{(k)}$ enters with positive coefficient; $\bar \ell_{ij}^{(k)}, \bar c_{ij}^{(k)}, \bar r_{ij}^{(k)}$ enter with negative coefficients. Thus $u_{ij}^{(k)}$ is monotone increasing in success and quality, and monotone decreasing in latency, cost, and risk.

\smallskip
\emph{Step 2 (Competence weights and fixed point).} The softplus $\phi$ is strictly increasing, so $C_{ij}$ is monotone in $u_{ij}^{(k)}$. Row-normalization preserves monotonicity in the target column $j$: increasing $C_{ij}$ raises the relative share of mass directed to $j$. The operator $T_Q(y) = \beta Q^\top y + (1-\beta)w$ is monotone on the positive cone, hence its fixed point $y^\star$ is monotone in each entry of $Q$. Therefore, improving edge $(i,j,k)$ increases $y^\star_j$ weakly.

\smallskip
\emph{Step 3 (Fusion monotonicity).} The fusion is $r = \mathrm{normalize}( (x^\star)^p \odot (y^\star)^{1-p} )$. For fixed $x^\star$, this map is coordinatewise increasing in $y^\star$, so $r_j$ weakly increases when $y^\star_j$ increases.

Combining steps 1–3, improvements in outcomes on edge $(i,j,k)$ weakly increase the final rank $r_j$.
\end{proof}

\begin{remark}[No penalty for improvement]
AgentRank-UC satisfies the desirable property that no agent can be harmed in ranking by improving its success rate or reducing latency, cost, or risk. This eliminates perverse incentives present in some heuristic scoring schemes.
\end{remark}

\begin{remark}[Strict vs.\ weak monotonicity]
If at least one outcome improves strictly (e.g., success probability rises) while others are unchanged, then $r_j$ increases strictly. If outcomes improve only marginally or in ways exactly canceled by normalization, $r_j$ may remain unchanged but never decreases.
\end{remark}

\subsection{Cold-Start Fairness}
\begin{theorem}[Cold-start fairness in AgentRank-UC]\label{thm:coldstart}
In AgentRank-UC, the usage and competence fixed points are defined by
\[
x^\star \;=\; \alpha P^\top x^\star + (1-\alpha)v, 
\qquad 
y^\star \;=\; \beta Q^\top y^\star + (1-\beta)w,
\]
where $\alpha,\beta\in(0,1)$, $P,Q$ are row-stochastic kernels, and $v,w\in \Delta^{n-1}$ are strictly positive priors. Then every coordinate of $x^\star,y^\star$ is strictly positive. Consequently, the fused AgentRank-UC score
\[
r \;=\; \mathrm{normalize}\!\big((x^\star)^{p}\odot (y^\star)^{\,1-p}\big),\qquad p\in[0,1],
\]
also satisfies $r_j>0$ for every agent $j$. In particular, newcomers with no historical calls retain strictly positive visibility through the priors.
\end{theorem}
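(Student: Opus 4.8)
The plan is to exploit the affine structure of the two fixed-point equations together with the strict positivity of the priors, and then push that positivity through the geometric fusion. Existence and uniqueness of $x^\star,y^\star\in\Delta^{n-1}$ are already supplied by Theorem~\ref{thm:agent-fp}, so the only new content is the sign of their coordinates and of $r$.

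First I would read off strict positivity of $x^\star$ directly from its own fixed-point identity. Since $P$ is row-stochastic and $x^\star\in\Delta^{n-1}$ is nonnegative, the vector $\alpha P^\top x^\star$ is coordinatewise nonnegative, hence
\[
x^\star \;=\; \alpha P^\top x^\star + (1-\alpha)v \;\ge\; (1-\alpha)v \;>\;0,
\]
where the inequalities are coordinatewise and the last one uses $\alpha<1$ together with $v\in\Delta^{n-1}$ strictly positive. The identical argument with $(Q,\beta,w)$ yields $y^\star\ge(1-\beta)w>0$. (Equivalently one may invoke the Neumann-series form $x^\star=(1-\alpha)\sum_{t\ge0}\alpha^t(P^\top)^t v$, whose $t=0$ term alone is already strictly positive while the remaining terms are nonnegative.)

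Second I would transport positivity through the fusion, reusing the well-posedness recorded in Theorem~\ref{thm:fusion}. For each agent $j$ the unnormalized coordinate $z_j=(x^\star_j)^{p}(y^\star_j)^{1-p}$ is a product of two strictly positive reals raised to exponents in $[0,1]$; the boundary cases $p\in\{0,1\}$ are unproblematic precisely because $x^\star_j,y^\star_j>0$, so no $0^0$ ambiguity arises. Thus $z_j>0$ for all $j$, the normalizing constant $Z=\sum_{j'}z_{j'}$ is a finite sum of strictly positive terms and hence strictly positive, and $r_j=z_j/Z>0$. Moreover, combining the two lower bounds above with monotonicity of $t\mapsto t^{p}$ gives the explicit floor
\[
r_j \;\ge\; \frac{\big((1-\alpha)v_j\big)^{p}\big((1-\beta)w_j\big)^{1-p}}{Z}\;>\;0,
\]
so an agent with no recorded interactions — whose source rows back off to $v$ and $w$ — still retains strictly positive visibility, which is the cold-start claim.

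I do not expect a genuine obstacle: the result is essentially a corollary of the affine/contraction structure already in place. The only point meriting a sentence of care is the boundary behaviour of the exponentiation at $p=0$ and $p=1$, which is exactly why \emph{strict} — rather than merely nonnegative — positivity of $x^\star,y^\star$ is the crux; everything after that is bookkeeping.
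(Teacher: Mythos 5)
Your proposal is correct and follows essentially the same route as the paper's proof: read strict positivity of $x^\star$ and $y^\star$ coordinatewise off the fixed-point identities via the $(1-\alpha)v$ and $(1-\beta)w$ teleportation terms, then observe that the geometric fusion and normalization preserve strict positivity. Your explicit floor $r_j \ge \big((1-\alpha)v_j\big)^{p}\big((1-\beta)w_j\big)^{1-p}/Z$ matches the bound the paper records separately in its ``minimum visibility guarantee'' remark.
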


\begin{proof}
\emph{Proof overview.} We show first that the teleportation term forces strict positivity in $x^\star$ and $y^\star$, even if some rows of $P$ or $Q$ are empty. We then argue that geometric fusion preserves positivity.

\smallskip
\emph{Step 1 (Strict positivity of $x^\star$).} By definition,
\[
x^\star \;=\; (1-\alpha)v \;+\; \alpha P^\top x^\star.
\]
Since $v>0$, the vector $(1-\alpha)v$ has strictly positive coordinates. The second term $\alpha P^\top x^\star$ is nonnegative. Hence each coordinate of $x^\star$ is strictly positive.

\smallskip
\emph{Step 2 (Strict positivity of $y^\star$).} The same argument applies with parameters $(Q,\beta,w)$. Thus $y^\star>0$ componentwise.

\smallskip
\emph{Step 3 (Fusion positivity).} The product $(x^\star)^p\odot (y^\star)^{1-p}$ is strictly positive componentwise, and normalization divides by a positive constant. Therefore $r>0$.

\smallskip
\emph{Conclusion.} All agents, including those with no observed interactions, receive strictly positive rank mass $r_j$ through the influence of positive priors $v,w$.
\end{proof}

\begin{remark}[Minimum visibility guarantee]
Every coordinate is bounded below by the prior contribution. For example,
\[
x^\star_j \;\ge\; (1-\alpha)v_j,\qquad
y^\star_j \;\ge\; (1-\beta)w_j.
\]
Thus no agent can vanish entirely from the rankings; newcomers always retain a guaranteed share of visibility.
\end{remark}

\begin{remark}[Design implication]
By choosing priors $v,w$ to reflect external certifications or attested identities, the system can boost trusted newcomers at entry. Alternatively, uniform priors ensure purely egalitarian cold-start treatment.
\end{remark}

\subsection{Stability Under Recency and Small Perturbations}
\begin{theorem}[Stability of AgentRank-UC under recency decay and small perturbations]\label{thm:stability}
Let the usage and competence kernels $P,Q$ be built from time–decayed telemetry as in Section~3, with exponentially decayed sufficient statistics and row–normalization (with prior backoff on empty rows). Fix $\alpha,\beta\in(0,1)$ and strictly positive priors $v,w\in\Delta^{n-1}$. Then:
\begin{enumerate}\itemsep0.2em
\item \textbf{Preservation under decay.} For any decay rate $\lambda>0$ (half–life $H=\ln 2/\lambda$), the kernels $P,Q$ are row–stochastic. Hence the AgentRank–UC fixed points $x^\star,y^\star$ are well–defined and unique by Theorem~\ref{thm:agent-fp}, and the power iterations converge linearly with rates at most $\alpha$ and $\beta$.
\item \textbf{Perturbation stability.} Let $\widetilde P,\widetilde Q$ be kernels obtained from (possibly) different decayed statistics or a different decay rate $\widetilde\lambda$, and let $\widetilde x^\star,\widetilde y^\star$ denote the corresponding fixed points. Then
\[
\|x^\star-\widetilde x^\star\|_1 \;\le\; \frac{\alpha}{1-\alpha}\,\|P-\widetilde P\|_1,
\qquad
\|y^\star-\widetilde y^\star\|_1 \;\le\; \frac{\beta}{1-\beta}\,\|Q-\widetilde Q\|_1.
\]
Consequently, the fused score $r=\mathrm{normalize}\!\big((x^\star)^p\odot (y^\star)^{1-p}\big)$ depends continuously on $(P,Q)$ and is Lipschitz on compact subsets of the strictly positive orthant.
\end{enumerate}
\end{theorem}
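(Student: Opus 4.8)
The plan is to treat the two assertions separately, reusing the contraction machinery of \Cref{thm:agent-fp} and the fusion regularity of \Cref{thm:fusion}. For part~(1), the only thing to verify is that row-stochasticity survives for every $\lambda>0$. The recency weights $\omega(t)=e^{-\lambda(T-t)}$ are strictly positive, so the decayed counts $N_{ij}^{(k)}=\sum\omega(t)$ are nonnegative and $U_{ij}=\sum_k N_{ij}^{(k)}\ge 0$; since softplus is strictly positive, $C_{ij}=\sum_k N_{ij}^{(k)}\phi(u_{ij}^{(k)})\ge 0$ as well. Row-normalization maps a row of positive mass into $\Delta^{n-1}$, while the prior backoff replaces an empty row by $v$ (resp.\ $w$), which lies in $\Delta^{n-1}$ by hypothesis; hence $P,Q$ are row-stochastic independently of $\lambda$, and \Cref{thm:agent-fp} applies verbatim to give unique $x^\star,y^\star$ and linear convergence at rates at most $\alpha$ and $\beta$.

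For part~(2) I would prove the bound for $x^\star$; the $y^\star$ case is identical with $(Q,\beta,w)$. Both fixed points use the common prior $v$, so subtracting the two identities and inserting the cross term $\alpha P^\top\widetilde x^\star$ gives
\[
x^\star-\widetilde x^\star \;=\; \alpha P^\top\!\big(x^\star-\widetilde x^\star\big)\;+\;\alpha\,(P-\widetilde P)^\top\widetilde x^\star .
\]
Passing to $\ell_1$ norms, using that $P^\top$ is nonexpansive in $\ell_1$ (row-stochasticity of $P$), and bounding the second term with the fact that $\widetilde x^\star$ is a probability vector,
\[
\big\|(P-\widetilde P)^\top\widetilde x^\star\big\|_1 \;\le\; \sum_i \widetilde x^\star_i\sum_j \big|P_{ij}-\widetilde P_{ij}\big| \;\le\; \|P-\widetilde P\|_1 ,
\]
one gets $(1-\alpha)\|x^\star-\widetilde x^\star\|_1\le\alpha\|P-\widetilde P\|_1$, and dividing by $1-\alpha$ yields the claim; the estimate holds whether $\|\cdot\|_1$ denotes the maximal row-wise $\ell_1$ deviation or the entrywise sum.

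For the fused score I would compose these two bounds with \Cref{thm:fusion}. The lower bounds $x^\star_j\ge(1-\alpha)v_j>0$ and $y^\star_j\ge(1-\beta)w_j>0$ (immediate from the fixed-point identities, cf.\ \Cref{thm:coldstart}), which also hold for $\widetilde x^\star,\widetilde y^\star$ since the priors are unchanged, confine all four vectors to a fixed compact subset $K$ of the strictly positive orthant. On $K$, \Cref{thm:fusion} supplies a single Lipschitz constant $L$ for $(x,y)\mapsto r$, so
\[
\|r-\widetilde r\|_1 \;\le\; L\big(\|x^\star-\widetilde x^\star\|_1+\|y^\star-\widetilde y^\star\|_1\big)\;\le\; L\Big(\tfrac{\alpha}{1-\alpha}\|P-\widetilde P\|_1+\tfrac{\beta}{1-\beta}\|Q-\widetilde Q\|_1\Big),
\]
which establishes continuity of $r$ in $(P,Q)$ and the stated Lipschitz property.

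The main obstacle is a pair of bookkeeping points rather than any real depth. In part~(2) one must resist the crude operator-norm bound $\|(P-\widetilde P)^\top\|\,\|\widetilde x^\star\|$ and instead exploit $\widetilde x^\star\in\Delta^{n-1}$ to recover the clean constant $\|P-\widetilde P\|_1$. In the fusion step, \Cref{thm:fusion} only provides \emph{local} Lipschitzness, so before a single constant $L$ can be used one must first pin $x^\star,y^\star$ and their perturbed counterparts inside a common compact subset of the positive orthant via the cold-start minimum-visibility lower bound; without that step the constant could blow up as coordinates approach zero.
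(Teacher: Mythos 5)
Your proposal is correct and follows essentially the same route as the paper's proof: the same verification that decay preserves row-stochasticity, the same cross-term decomposition $x^\star-\widetilde x^\star=\alpha P^\top(x^\star-\widetilde x^\star)+\alpha(P-\widetilde P)^\top\widetilde x^\star$ with nonexpansiveness and $\|\widetilde x^\star\|_1=1$, and the same composition with \Cref{thm:fusion} for the fused score. Your explicit use of the cold-start lower bounds to pin all four fixed points in a common compact subset before invoking a uniform Lipschitz constant is a worthwhile refinement of a step the paper leaves implicit, but it does not change the argument.
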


\begin{proof}
\emph{Proof overview.} We first note that exponential decay rescales nonnegative edge weights prior to row–normalization; this preserves stochasticity and therefore the well–posedness and convergence guarantees from Theorem~\ref{thm:agent-fp}. We then derive an explicit perturbation bound for the fixed points by comparing the two contractive maps associated with $(P,\alpha,v)$ and $(\widetilde P,\alpha,v)$ (and analogously for $Q$). Row-normalization Lipschitz behavior and perturbation consequences for stationary distributions are classical \cite{Seneta2006,Levin2017}.

\smallskip
\emph{Step 1 (Decay preserves stochasticity and convergence).}
Let $U$ (usage weights) and $C$ (competence weights) be the nonnegative matrices produced from the decayed sufficient statistics; row–normalization with prior backoff yields row–stochastic $P=\mathrm{rownorm}(U)$ and $Q=\mathrm{rownorm}(C)$. Exponential decay $\omega(t)=e^{-\lambda(T-t)}$ only rescales nonnegative contributions and does not introduce negative entries; empty rows are replaced by priors, so each row sums to one. Therefore the PageRank–style operators
\[
T_P(x)=\alpha P^\top x+(1-\alpha)v,\qquad T_Q(y)=\beta Q^\top y+(1-\beta)w
\]
are strict contractions in $\ell_1$ with moduli $\alpha$ and $\beta$ (unchanged by $\lambda$). Theorem~\ref{thm:agent-fp} gives existence, uniqueness, and linear convergence of $x^\star,y^\star$.

\smallskip
\emph{Step 2 (Perturbation bound for $x^\star$).}
Let $\widetilde T(x)=\alpha \widetilde P^\top x+(1-\alpha)v$ and let $x^\star,\widetilde x^\star$ denote the respective fixed points. Subtract the fixed–point equations:
\[
x^\star-\widetilde x^\star
= \alpha P^\top x^\star - \alpha \widetilde P^\top \widetilde x^\star
= \alpha P^\top (x^\star-\widetilde x^\star) \;+\; \alpha (P^\top-\widetilde P^\top)\,\widetilde x^\star.
\]
Taking $\ell_1$ norms and using $\|P^\top w\|_1\le \|w\|_1$ and $\|\widetilde x^\star\|_1=1$ yields
\[
\|x^\star-\widetilde x^\star\|_1
\;\le\; \alpha \|x^\star-\widetilde x^\star\|_1 \;+\; \alpha \|P-\widetilde P\|_1.
\]
Rearranging gives the claimed bound
\[
\|x^\star-\widetilde x^\star\|_1 \;\le\; \frac{\alpha}{1-\alpha}\,\|P-\widetilde P\|_1.
\]
The same argument holds for $y^\star$ with $(Q,\beta)$ in place of $(P,\alpha)$.

\smallskip
\emph{Step 3 (Continuity and Lipschitz fusion).}
By Theorem~\ref{thm:fusion}, the map $(x,y)\mapsto r$ is continuous and locally Lipschitz on the strictly positive orthant. Combining with Step~2 shows that small changes in $P,Q$ (for example, induced by small changes in the decay rate $\lambda$ or in the decayed statistics) lead to proportionally bounded changes in $x^\star,y^\star$, and hence in $r$.
\end{proof}

\begin{remark}[Continuity in the half–life]
If $\lambda$ is replaced by $\widetilde\lambda$, then for each edge weight the decayed sum changes by at most a factor proportional to $\sup_\tau |e^{-\lambda \tau}-e^{-\widetilde\lambda \tau}|$. Row–normalization is continuous on the positive cone (with prior backoff ensuring strictly positive row sums), so $\|P(\lambda)-P(\widetilde\lambda)\|_1$ and $\|Q(\lambda)-Q(\widetilde\lambda)\|_1$ can be made arbitrarily small as $|\lambda-\widetilde\lambda|\to 0$. The bound in Theorem~\ref{thm:stability} then yields continuity of $x^\star,y^\star,r$ in $\lambda$.
\end{remark}

\begin{remark}[Convergence rate is independent of recency]
The linear convergence rates of the power iterations are controlled by $\alpha$ and $\beta$ (the contraction moduli), not by $\lambda$. Thus changing the half–life trades off \emph{responsiveness} versus \emph{variance} in the input statistics without affecting the existence/uniqueness or the asymptotic rate of the fixed–point solver.
\end{remark}

\begin{remark}[From input statistics to kernels]
Let $U,\widetilde U$ be two nonnegative usage–weight matrices with row sums bounded away from zero (or backed off to priors). Then the row–normalization map satisfies a Lipschitz–type bound $\| \mathrm{rownorm}(U)-\mathrm{rownorm}(\widetilde U)\|_1 \le K\,\|U-\widetilde U\|_1$ for some $K$ depending on the minimal row sum. An analogous bound holds for competence weights $C$. Together with Step~2, this yields end–to–end stability from telemetry perturbations to the fused rank $r$.
\end{remark}

\subsection{Sybil Non-Amplification}
\begin{theorem}[Sybil non-amplification in AgentRank-UC]\label{thm:sybil}
Let $S\subseteq\{1,\dots,n\}$ be a collusive subset (“Sybil clique”). Write
\(
x_S \coloneqq \sum_{j\in S} x^\star_j,\;
y_S \coloneqq \sum_{j\in S} y^\star_j,\;
r_S \coloneqq \sum_{j\in S} r_j,\;
v_S \coloneqq \sum_{j\in S} v_j
\)
and let $v_{\min,S^c} \coloneqq \min_{j\notin S} v_j$.
Assume $P$ and $Q$ are formed from time-decayed telemetry as in Section~3, with strictly positive priors $v,w$ and $\alpha,\beta\in(0,1)$, and the fused score
\(
r=\mathrm{normalize}\!\big((x^\star)^p\odot (y^\star)^{1-p}\big)
\)
for some $p\in(0,1]$.
Then:

\begin{enumerate}\itemsep0.2em
\item[\textbf{(A)}] \textbf{Usage-only amplification bound.}
For any row-stochastic $P$ (including extreme intra-clique usage pumping),
\begin{equation}\label{eq:xs-sandwich}
(1-\alpha)\,v_S \;\le\; x_S \;\le\; \alpha \;+\; (1-\alpha)\,v_S.
\end{equation}
In particular, even if the clique tries to trap usage mass, $x_S$ cannot exceed $\alpha+(1-\alpha)v_S$.

\item[\textbf{(B)}] \textbf{UC bound with unchanged competence.}
If collusion does not improve competence (i.e., $Q$ is unchanged on $S$ so that $y_S<1$), then for any $p\in(0,1]$,
\begin{equation}\label{eq:RS-bound}
r_S \;\le\;
\frac{\big(\alpha+(1-\alpha)v_S\big)^{p}\; y_S^{\,1-p}}
     {\big(\alpha+(1-\alpha)v_S\big)^{p}\; y_S^{\,1-p}
      \;+\; (1-\alpha)^{p}\; (v_{\min,S^c})^{p}\; (1-y_S)^{\,1-p}}.
\end{equation}
Consequently, if $v_{\min,S^c}>0$ and $y_S<1$, then $r_S<1$, so increasing intra-clique usage alone cannot drive the fused rank mass to $1$.
\end{enumerate}
\end{theorem}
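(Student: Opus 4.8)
The plan is to prove part (A) by aggregating the usage fixed‑point equation over $S$, then feed the resulting upper bound into part (B), which reduces to a ratio estimate controlled by Hölder's inequality and a concavity bound.

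\textbf{Part (A).} Summing $x^\star=\alpha P^\top x^\star+(1-\alpha)v$ over the coordinates $j\in S$ gives $x_S=\alpha\sum_{j\in S}(P^\top x^\star)_j+(1-\alpha)v_S$. Rewriting $\sum_{j\in S}(P^\top x^\star)_j=\sum_i x^\star_i\big(\sum_{j\in S}P_{ij}\big)$ and using that $P$ is row‑stochastic — so $0\le\sum_{j\in S}P_{ij}\le 1$ for every source $i$ — the aggregate $\sum_{j\in S}(P^\top x^\star)_j$ lies between $0$ and $\sum_i x^\star_i=1$. Substituting these two extremes yields the sandwich $(1-\alpha)v_S\le x_S\le\alpha+(1-\alpha)v_S$. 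The point worth stressing is that nothing about the intra‑clique structure of $P$ is used: the teleport term caps the edge‑routed mass into any set at $\alpha$, so arbitrary usage pumping among Sybils cannot move $x_S$ past $\alpha+(1-\alpha)v_S$.

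\textbf{Part (B).} Write $z_j\coloneqq(x^\star_j)^p(y^\star_j)^{1-p}$ for the unnormalized fused weights, so that $r_S=\big(\sum_{j\in S}z_j\big)\big/\big(\sum_{j\in S}z_j+\sum_{j\notin S}z_j\big)$; since $t\mapsto t/(t+c)$ is increasing in $t$ for fixed $c>0$, it suffices to upper‑bound $\sum_{j\in S}z_j$ and lower‑bound $\sum_{j\notin S}z_j$. For the numerator, Hölder's inequality with conjugate exponents $1/p$ and $1/(1-p)$ gives $\sum_{j\in S}z_j\le\big(\sum_{j\in S}x^\star_j\big)^p\big(\sum_{j\in S}y^\star_j\big)^{1-p}=x_S^{\,p}y_S^{\,1-p}$, and part (A) replaces $x_S$ by $\alpha+(1-\alpha)v_S$. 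For the denominator, the fixed‑point equation forces $x^\star_j\ge(1-\alpha)v_j\ge(1-\alpha)v_{\min,S^c}$ for each $j\notin S$, so $\sum_{j\notin S}z_j\ge(1-\alpha)^p(v_{\min,S^c})^p\sum_{j\notin S}(y^\star_j)^{1-p}$; then subadditivity of the concave map $t\mapsto t^{1-p}$ on $[0,\infty)$ (valid since $1-p\in[0,1)$ and the map vanishes at $0$) gives $\sum_{j\notin S}(y^\star_j)^{1-p}\ge\big(\sum_{j\notin S}y^\star_j\big)^{1-p}=(1-y_S)^{1-p}$. Plugging these bounds into the monotone ratio produces exactly \eqref{eq:RS-bound}, and the strict conclusion $r_S<1$ follows because the second denominator term $(1-\alpha)^p(v_{\min,S^c})^p(1-y_S)^{1-p}$ is strictly positive whenever $v_{\min,S^c}>0$ and $y_S<1$.

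The main obstacle is the denominator lower bound rather than the numerator: one must orient the concavity inequality correctly — $\sum_j a_j^{1-p}\ge(\sum_j a_j)^{1-p}$, not the reverse — and then dispatch the boundary cases. When $p=1$ the factors $y_S^{1-p}$ and $(1-y_S)^{1-p}$ collapse to $1$, competence drops out of $z$, and the bound degenerates to $r_S\le(\alpha+(1-\alpha)v_S)\big/\big(\alpha+(1-\alpha)v_S+(1-\alpha)v_{\min,S^c}\big)<1$; when $S^c=\emptyset$ the claim is vacuous ($r_S=1$ and $v_{\min,S^c}$ is undefined), so the statement should carry the standing assumption $S^c\neq\emptyset$, which moreover makes $y_S<1$ automatic since $y^\star_j\ge(1-\beta)w_j>0$ for every $j\notin S$. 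Everything else — the row‑stochastic pinch, Hölder, positivity of the fixed points from Theorem~\ref{thm:coldstart}, and the monotonicity of $t\mapsto t/(t+c)$ — is routine.
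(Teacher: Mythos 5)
Your proof is correct and follows essentially the same route as the paper's: part (A) by summing the fixed-point equation over $S$ and using row-stochasticity, part (B) by Hölder on the numerator, the teleportation floor $x^\star_j\ge(1-\alpha)v_j$ plus subadditivity of $t\mapsto t^{1-p}$ on the outside term, and monotonicity of $t\mapsto t/(t+c)$. Your explicit treatment of the boundary cases ($p=1$ and the need for $S^c\neq\emptyset$, which indeed makes $y_S<1$ automatic) is a small but worthwhile refinement the paper leaves implicit.
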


\begin{proof}
\emph{Proof overview.} Part (A) is a direct mass-balance argument for the usage fixed-point $x^\star$: teleportation injects $(1-\alpha)v$ each step and transitions can send at most total mass $1$ into $S$; this yields a sharp sandwich bound for $x_S$. Part (B) upper-bounds the UC numerator over $S$ using Hölder-type aggregation and lower-bounds the outside contribution using the positive teleportation prior on $S^c$, which ensures a nonzero floor on $x$ outside $S$. The bound \eqref{eq:RS-bound} then follows by grouping $S$ versus $S^c$. Graph-based defenses against Sybil amplification in related contexts motivate our non-amplification perspective \cite{Yu2006,Yu2008,Kamvar2003}.

\smallskip
\noindent\emph{(A) Usage-only amplification bound.}
Sum the usage fixed-point equation $x^\star=(1-\alpha)v+\alpha P^\top x^\star$ over $j\in S$:
\[
x_S \;=\; (1-\alpha) v_S \;+\; \alpha\,\sum_{i=1}^n x^\star_i\, P_{iS},
\qquad P_{iS}\coloneqq \sum_{j\in S} P_{ij}.
\]
Since $0\le P_{iS}\le 1$ and $\sum_i x^\star_i=1$, we have
\[
(1-\alpha) v_S \;\le\; x_S \;\le\; (1-\alpha) v_S + \alpha\cdot 1
\;=\; \alpha+(1-\alpha) v_S,
\]
which proves \eqref{eq:xs-sandwich}. Both bounds are tight (e.g., no inbound/outbound into $S$ gives $x_S=(1-\alpha)v_S$; everyone transitions into $S$ gives $x_S=\alpha+(1-\alpha)v_S$).

\smallskip
\noindent\emph{(B) UC bound with unchanged competence.}
Write the fused mass over $S$ as
\[
r_S \;=\; \frac{\sum_{j\in S} (x^\star_j)^{p}(y^\star_j)^{1-p}}
{\sum_{j\in S} (x^\star_j)^{p}(y^\star_j)^{1-p}
 \;+\; \sum_{j\notin S} (x^\star_j)^{p}(y^\star_j)^{1-p}}.
\]
\emph{Upper bound the numerator:} concavity of $t\mapsto t^{p}$ on $[0,1]$ and Hölder’s inequality yield the aggregate bound
\[
\sum_{j\in S} (x^\star_j)^{p}(y^\star_j)^{1-p}
\;\le\; \Big(\sum_{j\in S} x^\star_j\Big)^{p}\;
       \Big(\sum_{j\in S} y^\star_j\Big)^{1-p}
\;=\; x_S^{p}\, y_S^{\,1-p}
\;\le\; \big(\alpha+(1-\alpha)v_S\big)^{p}\, y_S^{\,1-p},
\]
where the last inequality uses the usage bound $x_S\le \alpha+(1-\alpha)v_S$ from (A).

\emph{Lower bound the denominator’s outside term:} for each $j\notin S$, strict positivity of the prior implies $x^\star_j\ge (1-\alpha)v_j$, so
\[
\sum_{j\notin S} (x^\star_j)^{p}(y^\star_j)^{1-p}
\;\ge\; \sum_{j\notin S} \big((1-\alpha)v_j\big)^{p} (y^\star_j)^{1-p}
\;\ge\; (1-\alpha)^{p} (v_{\min,S^c})^{p} \sum_{j\notin S} (y^\star_j)^{1-p}
\;\ge\; (1-\alpha)^{p} (v_{\min,S^c})^{p}\, (1-y_S)^{\,1-p}.
\]
Combining the two bounds and simplifying gives \eqref{eq:RS-bound}. Since $v_{\min,S^c}>0$ and $y_S<1$, the denominator strictly exceeds the numerator, hence $r_S<1$.
\end{proof}

\begin{remark}[What the bound says operationally]
The usage-only share $x_S$ can be made large by collusion, but teleportation enforces the hard ceiling $x_S\le \alpha+(1-\alpha)v_S$. AgentRank-UC mixes $x^\star$ with competence $y^\star$: if the clique’s competence share $y_S$ is bounded away from $1$ (no real improvement), the fused mass $r_S$ remains strictly $<1$, with an explicit gap controlled by the outside prior floor $v_{\min,S^c}$ and by $1-y_S$.
\end{remark}

\begin{remark}[Uniform priors and a concrete gap]
If $v$ is uniform ($v_j=1/n$) and $p\in(0,1)$, then
\[
r_S \;\le\; 
\frac{\big(\alpha+(1-\alpha)\tfrac{|S|}{n}\big)^{p}\; y_S^{\,1-p}}
{\big(\alpha+(1-\alpha)\tfrac{|S|}{n}\big)^{p}\; y_S^{\,1-p}
 +(1-\alpha)^{p}\big(\tfrac{1}{n}\big)^{p}\, (1-y_S)^{\,1-p}},
\]
so even aggressive usage pumping (large $\alpha$, dense intra-clique calls) cannot overwhelm a modest outside competence share when $p<1$.
\end{remark}

\begin{remark}[When the bound tightens]
The outside floor $(1-\alpha)^{p} (v_{\min,S^c})^{p} (1-y_S)^{1-p}$ increases if either the teleportation rate $(1-\alpha)$, the minimum outside prior $v_{\min,S^c}$, or the outside competence share $(1-y_S)$ increases, all of which further limit Sybil mass. Identity-weighted priors (larger $v$ for verified agents) strengthen this effect.
\end{remark}

%%%%%%%%%%%%%%%%%%%%%%%%%%%%%%%%%%%%%%%%%%%%%%%%%%%%%
%%%%%%%%%%%%%%%%%%%%%%%%%%%%%%%%%%%%%%%%%%%%%%%%%%%%%
%%%%%%%%%%%%%%%%%%%%%%%%%%%%%%%%%%%%%%%%%%%%%%%%%%%%%
%%%%%%%%%%%%%%%%%%%%%%%%%%%%%%%%%%%%%%%%%%%%%%%%%%%%%
\section{Simulation of AgentRank}

\subsection{Goals of Simulation}

The purpose of our simulations is to evaluate whether AgentRank-UC achieves the properties claimed in Section~3 under controlled, synthetic conditions. Since no large-scale ecosystem of interacting agents with standardized telemetry yet exists, simulation provides the only principled way to test both the algorithm’s behavior and its robustness to adversarial scenarios. Our goals are fourfold.  

First, we aim to demonstrate the \emph{dual-signal advantage}: by combining usage and competence signals, AgentRank-UC should surface higher-quality agents than methods that rely on either signal in isolation. This validates the central premise of the algorithm---that popularity and demonstrated performance must be considered together.  

Second, we seek to show that the balance parameter $p$ provides an \emph{interpretable trade-off} between usage and competence. By sweeping $p$ across its range, we expect to observe smooth transitions between rankings that reflect popularity alone and those that reflect competence alone, with intermediate values capturing an effective compromise.  

Third, we test the algorithm’s \emph{responsiveness and fairness}. With exponential decay, AgentRank-UC should adapt to shocks such as sudden improvements or degradations in agent competence. At the same time, the algorithm should satisfy monotonicity, meaning that improved success rates always increase rank, and cold-start agents should receive non-negligible visibility through priors.  

Finally, we evaluate \emph{robustness to manipulation}. In particular, we simulate Sybil cliques and usage-pumping attacks in which colluding agents attempt to inflate usage without genuine competence. AgentRank-UC should resist such manipulation more effectively than pure usage-based rankings.  

Together, these goals provide a comprehensive test of whether AgentRank-UC delivers on its promises of accuracy, adaptability, fairness, and robustness, thereby establishing confidence in its viability for real-world deployment.  We follow standard ranking-evaluation conventions (e.g., NDCG, top-$k$ quality) as used in graph-ranking literature \cite{Gleich2015}. Agent-based evaluation ideas are informed by recent benchmarks and multi-agent frameworks \cite{Liu2024,Wu2023}.

\subsection{World Model and Data Generation}

To evaluate AgentRank-UC in the absence of real-world telemetry, we construct a synthetic environment that emits interaction logs consistent with the OAT-Lite schema. The model is deliberately simple: it captures competence, popularity, and cost--risk trade-offs while remaining lightweight enough to simulate on consumer-grade hardware.  

\paragraph{Agents and tasks.}  
We instantiate $n$ agents and $d$ task types. Each agent $j$ is assigned latent parameters for each task $k$: a competence level $\theta_j(k) \in [0,1]$, a mean latency $\ell_j(k)$, a mean cost $c_j(k)$, and a risk parameter $r_j(k)$. These parameters represent the ground-truth capabilities of the agent and determine the distribution of outcomes whenever the agent is invoked.  

\paragraph{Archetypes.}  
For interpretability, agents are drawn from a small set of archetypes. Popular-but-Mediocre (PbM) agents attract high baseline traffic but achieve only moderate competence ($\theta \approx 0.5$). Niche-but-Excellent (NbE) agents are highly competent ($\theta \approx 0.9$) on one or two task types but receive little baseline traffic. Balanced-Strong (BS) agents are generally strong performers ($\theta \approx 0.8$) across all tasks. Cheap-but-Risky (CbR) agents provide low latency and cost but have elevated risk. A Sybil Clique (SY) consists of colluding agents that call each other frequently while maintaining only mediocre competence. Finally, Newcomer-Good (NC) agents are introduced midway through the simulation with high competence but no usage history. These archetypes allow us to stress-test the ranking algorithm under realistic scenarios of popularity bias, specialization, risk, collusion, and cold-start entry.

\begin{table}[h]
\centering
\caption{Agent archetypes in the simulation world. Each archetype has characteristic profiles of competence, latency, cost, and risk. Qualitative labels (Low/Medium/High) are accompanied by indicative numeric values.}
\begin{tabularx}{\linewidth}{lXXXX}
\toprule
\textbf{Archetype} 
& \textbf{Competence} 
& \textbf{Latency} 
& \textbf{Cost} 
& \textbf{Risk} \\
\midrule
\textbf{Balanced-Strong (BS)} 
& High ($\sim$0.8) 
& Medium ($\sim$300 ms) 
& Medium ($\sim$1.0) 
& Low ($\sim$0.05) \\

\textbf{Popular-but-Mediocre (PbM)} 
& Medium ($\sim$0.55) 
& Medium ($\sim$300 ms) 
& Medium ($\sim$1.0) 
& Low ($\sim$0.05) \\

\textbf{Niche-but-Excellent (NbE)} 
& High in one task ($\sim$0.9), Medium elsewhere ($\sim$0.5) 
& Medium ($\sim$270 ms) 
& Medium ($\sim$0.9) 
& Low ($\sim$0.05) \\

\textbf{Cheap-but-Risky (CbR)} 
& Medium ($\sim$0.65) 
& Low ($\sim$180 ms) 
& Low ($\sim$0.6) 
& High ($\sim$0.15) \\

\textbf{Sybil Clique (SY)} 
& Medium ($\sim$0.5) 
& Medium--High ($\sim$320 ms) 
& Medium ($\sim$1.0) 
& Medium--High ($\sim$0.10) \\
\bottomrule
\end{tabularx}
\label{tab:archetypes}
\end{table}

\begin{figure}[h]
  \centering
  \includegraphics[width=0.55\linewidth]{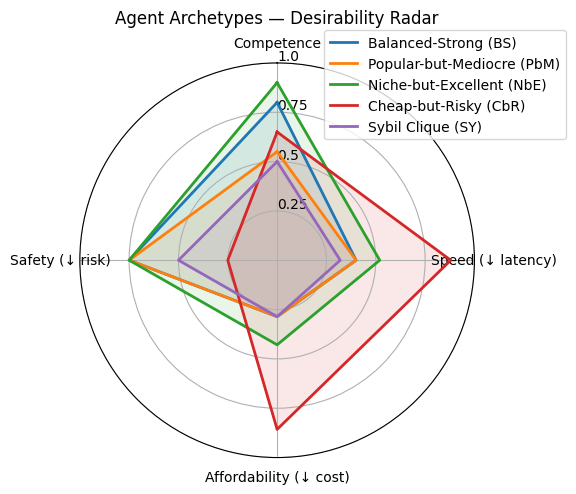}
  \caption{Desirability radar for agent archetypes across four dimensions: Competence, Speed (inverse of latency), Affordability (inverse of cost), and Safety (inverse of risk). Higher is better along each axis.}
  \label{fig:archetype-radar}
\end{figure}

\paragraph{Callers and routing.}  
All agents can act as callers. At each timestep, a caller $i$ samples a task $k$ and selects a callee $j$. We distinguish two regimes.  

\emph{Pre-rank (burn-in).} Before any ranking has been computed, callers select according to
\[
\Pr(i \to j \mid k) \propto (1-\rho)\,\text{popularity}_j + \rho\,\widehat{\theta}_j(k) + \epsilon,
\]
where $\text{popularity}_j$ is a baseline popularity prior (e.g., Zipf-distributed), $\widehat{\theta}_j(k)$ is a noisy proxy for competence, $\rho$ controls their relative weight, and $\epsilon$ ensures exploration.  

\emph{Rank-informed (after the first epoch).} Once the indexer publishes per-task ranks $r^{(k)}$, the selection rule becomes
\[
\Pr(i \to j \mid k) \propto (1-\rho-\gamma)\,\text{popularity}_j + \rho\,\widehat{\theta}_j(k) + \gamma\,r^{(k)}_j + \epsilon,
\]
where $\gamma$ weights the influence of the disseminated AgentRank-UC scores. In practice, we implement this as a softmax with temperature $\tau>0$, which preserves preference orderings while avoiding deterministic lock-in.  

\paragraph{Interaction process.}  
When a call is made, outcomes are drawn from the callee’s latent profile. A success indicator $z$ is drawn from $\mathrm{Bernoulli}(\theta_j(k))$. A quality score $q$ is sampled from a truncated Normal centered at $\theta_j(k)$. Latency $\ell$ is drawn from a log-normal distribution with mean $\ell_j(k)$, cost $c$ from a gamma distribution with mean $c_j(k)$, and risk $r$ from a Beta distribution centered on $r_j(k)$. Each interaction produces a tuple $(i \to j, k, t; z, q, \ell, c, r)$, which is aggregated into OAT-Lite records.  

\paragraph{Non-stationarity.}  
To test responsiveness, we introduce shocks during the simulation. At a designated time $t^\star$, one PbM agent experiences a competence drop, an NbE agent improves further, and an NC agent enters the ecosystem with high competence but no prior calls. These events probe how quickly rankings adapt, whether improvements translate monotonically into higher ranks, and how newcomers are incorporated.  

\medskip
This world model provides a principled but tractable testbed: it is rich enough to exercise dual-signal discovery, recency responsiveness, cold-start fairness, and Sybil resistance, yet simple enough to simulate efficiently on a laptop.

\subsection{Experimental Protocol}

We now describe the general protocol followed across all simulation experiments. Each component is standardized so that results are comparable across different parameter settings and regimes.

\begin{itemize}
    \item \textbf{Timesteps.} Simulations unfold over a fixed horizon of $T$ epochs. Within each epoch, a set number of calls is generated, and the resulting interactions are logged. Unless otherwise specified, $T=40$ epochs are used with $200$--$300$ calls per epoch. Early epochs serve as a burn-in period, after which rankings are recomputed and incorporated into caller routing.

    \item \textbf{Telemetry.} At the end of each epoch, raw interactions are aggregated into OAT-Lite telemetry reports. For each $(i,j,k)$ triple (caller $i$, callee $j$, task type $k$), the sufficient statistics include decayed counts of calls and successes, as well as decayed sums of quality, latency, cost, and risk. Exponential decay with half-life $H=\ln 2/\lambda$ ensures that recent behavior influences rankings more strongly.

    \item \textbf{Algorithm.} Given telemetry, we construct usage and competence kernels $P$ and $Q$, and compute usage and competence fixed points with teleportation parameters $(\alpha,\beta)\in(0,1)$. AgentRank-UC then combines them via the geometric merge parameter $p\in[0,1]$. Unless otherwise noted, priors $(v,w)$ are uniform. Hyperparameters $(\alpha,\beta,p,\lambda)$ are reported for each experiment.

    \item \textbf{Baselines.} To contextualize AgentRank-UC, we evaluate against usage-only (the stationary distribution of $P$), competence-only (the stationary distribution of $Q$), and the naive success rate baseline (empirical mean of successes). In some settings we also include PageRank on the usage graph as a classical reference.

    \item \textbf{Ablations.} Where informative, we test simplified variants such as removing Beta smoothing, eliminating priors (which penalizes cold-start agents), or dropping cost/latency/risk penalties. These ablations clarify which design choices are critical for robustness.

    \item \textbf{Evaluation metrics.} Discovery quality is measured by the mean true competence of the top-$k$ agents (Quality@$k$) and by normalized discounted cumulative gain (NDCG@$k$). Rank correlation with ground truth competence is assessed via Kendall-$\tau$. Decision quality is evaluated through regret (gap to an oracle that always selects the best agent for each task). Robustness is evaluated through Sybil mass (total rank assigned to collusive cliques).
\end{itemize}

\paragraph{Clean vs.\ Realistic regimes.}
Throughout our experiments we distinguish two regimes. In the \emph{clean} regime, competence is caller-independent, no noise is added to routing, there are no Sybil cliques, and penalties on latency, cost, and risk are removed. This setting provides a near-ideal stationary world in which competence-only ranking should dominate. In the \emph{realistic} regime, we introduce noise in competence proxies, random record drops, caller-dependent perturbations, and a small Sybil clique with collusive calls. We also reinstate penalties on latency, cost, and risk. This regime stresses the algorithm under conditions of sparsity, noise, and adversarial behavior, and typically reveals the benefits of a modest usage component.

\subsection{Experiments and Results}
We organize results around the core claims and goals:
\begin{itemize}
    \item \textbf{Dual-signal advantage (Exp-1).} Show AgentRank-UC versus baselines.  
    \item \textbf{Balance parameter $p$ (Exp-2).} Sweep $p$, plot trade-offs.  
    \item \textbf{Recency and shocks (Exp-3).} Show rank trajectories before and after shocks under different half-lives.  
    \item \textbf{Monotonicity and cold-start (Exp-4).} Demonstrate non-decreasing ranks and newcomer lift.  
    \item \textbf{Sybil resistance (Exp-5).} Compare mass assigned to Sybil cliques under different methods.  
    \item \textbf{Topic conditioning (Exp-6, optional).} Demonstrate specialist surfacing.  
\end{itemize}
Each experiment  briefly restates its expectation, present results (plots or tables), and provides interpretation.  

\subsection{Experiment One- Sanity Check and Comparison with Oracle Baseline}
The first experiment evaluates AgentRank-UC against usage-only, competence-only, and a naive baseline that simply reports empirical success rates for each callee (aggregated over callers). In our simulation world, competence $\theta_j(k)$ is caller-independent and outcomes are plentiful, so the naive success-rate estimator is effectively an \emph{oracle}: with enough samples it converges directly to the true underlying competence. This baseline is not realistic for open ecosystems---where telemetry is sparse, delayed, adversarial, and often caller-dependent---but it provides a useful upper bound in a controlled setting.

We instantiate a world with $n=100$ agents across $d=3$ task types, drawn from the archetypes in Section~4.2 (Popular-but-Mediocre, Niche-but-Excellent, Balanced-Strong, Cheap-but-Risky, Sybil Clique, and Newcomer-Good). The system runs for $40$ epochs with $200$ calls per epoch. After a $5$-epoch burn-in, the indexer computes per-task rankings each epoch and disseminates them; callers then incorporate these ranks in their selection rule. We evaluate the last epoch using Quality@$k$ (mean true competence of the top-$k$ agents), NDCG@$k$, Spearman’s $\rho$ with ground truth, and Regret@$k$ relative to an oracle that always selects the best agent.

\begin{figure}[!htb]
  \centering
  \begin{subfigure}[t]{0.48\textwidth}
    \centering
    \includegraphics[width=\linewidth]{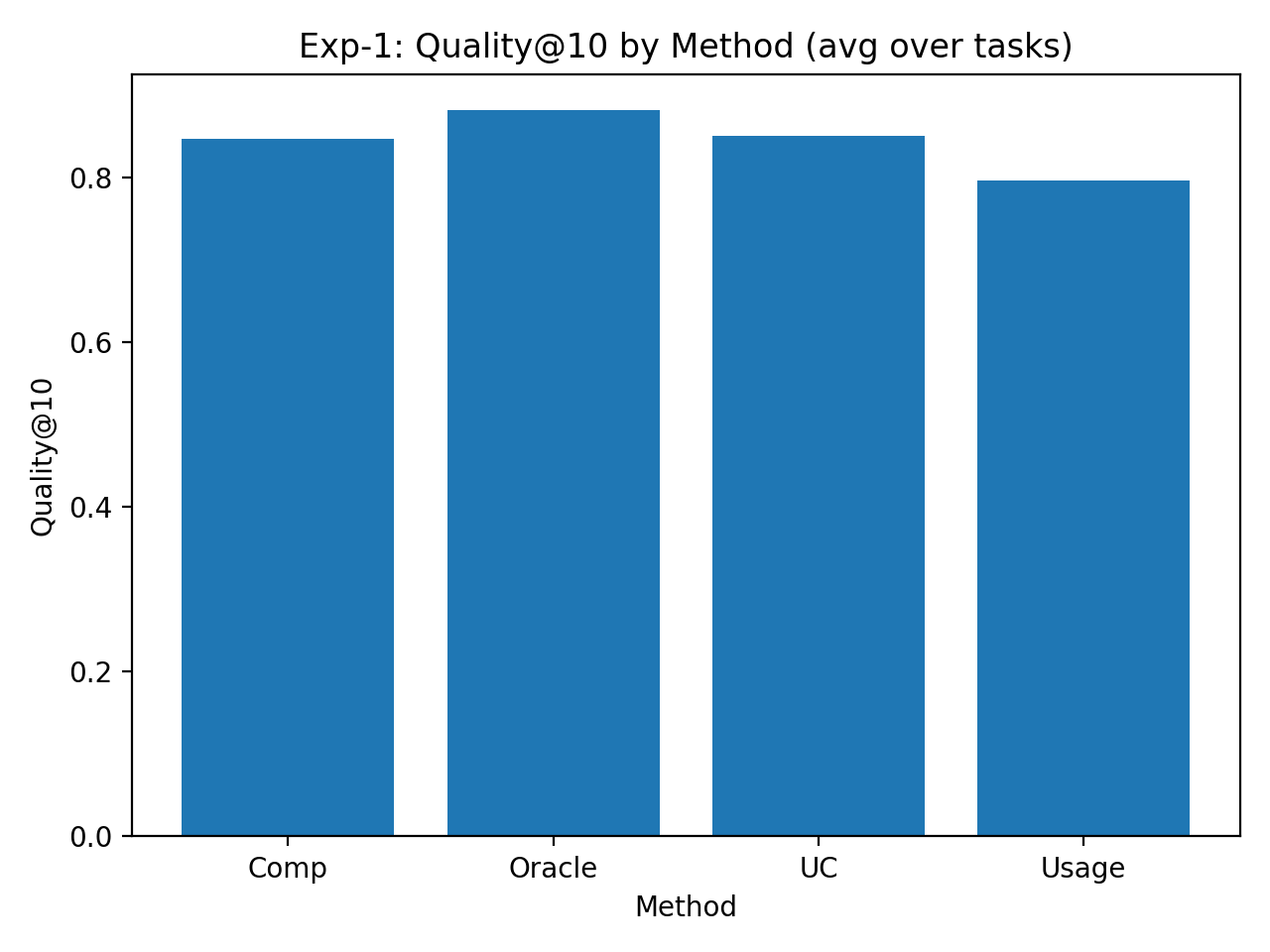}
    \caption{Quality@10 by method (avg over tasks). The naive success-rate baseline acts as an oracle in this toy setting; AgentRank-UC tracks it closely and clearly outperforms usage-only.}
    \label{fig:quality10}
  \end{subfigure}\hfill
  \begin{subfigure}[t]{0.48\textwidth}
    \centering
    \includegraphics[width=\linewidth]{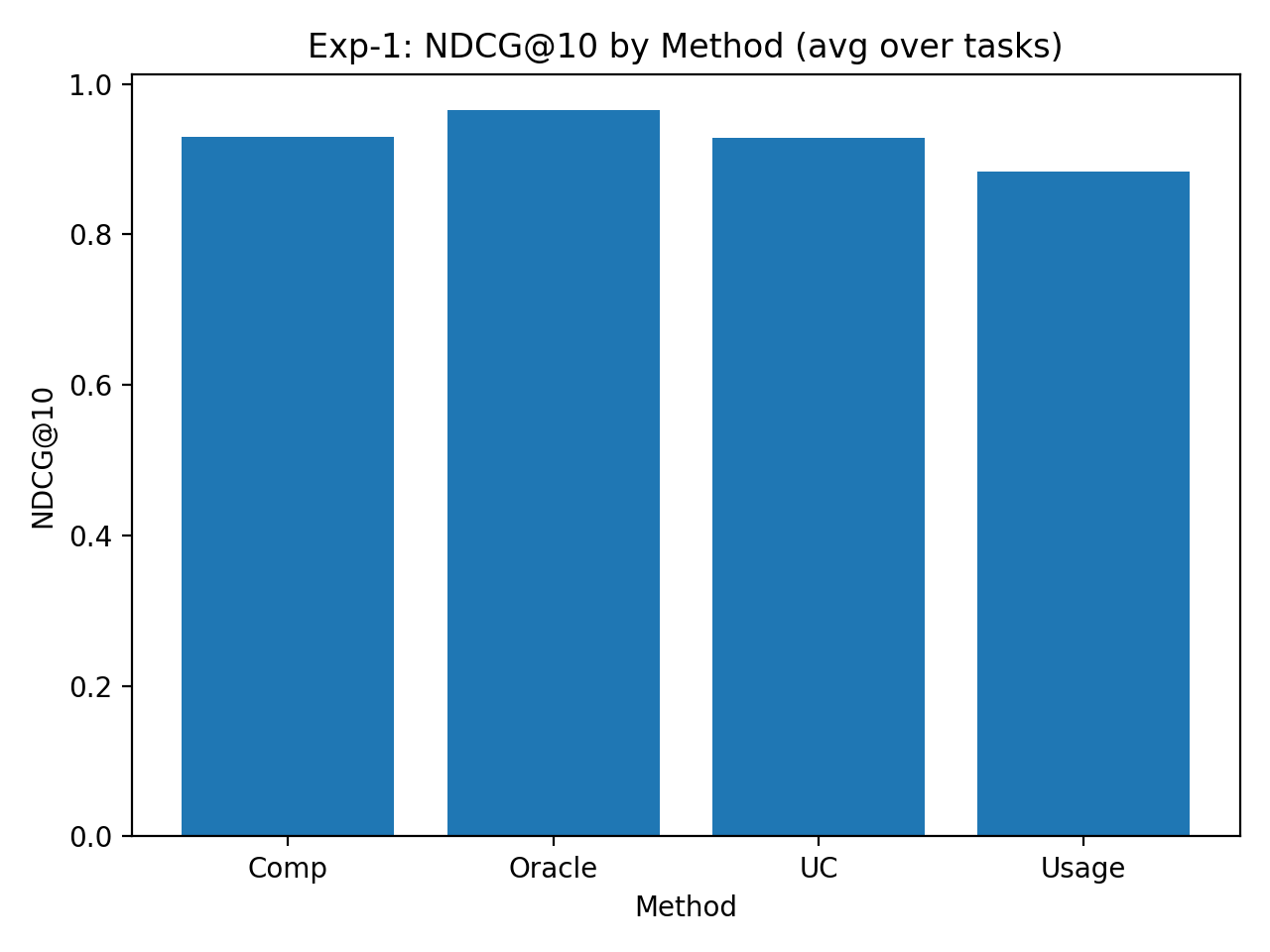}
    \caption{NDCG@10 by method (avg over tasks). AgentRank-UC is near-oracle and substantially stronger than usage-only.}
    \label{fig:ndcg10}
  \end{subfigure}
  \caption{Experiment 1 bar charts side-by-side.}
  \label{fig:exp1-bars}
\end{figure}

\paragraph{Interpretation.}
In this controlled stationary setting, the naive success-rate baseline serves as an oracle and achieves the highest top-$k$ quality. AgentRank-UC tracks this oracle closely, substantially outperforming usage-only and nearly matching competence-only. Importantly, the oracle assumes perfect observability and caller-independent competence; such assumptions do not hold in open ecosystems. Thus, while the oracle provides a useful upper bound, the results show that AgentRank-UC already operates near this ceiling while offering robustness (recency, smoothing, multi-objective utility) that naive methods lack.

\subsection{Experiment Two - Role of Balance parameter $p$ }

This experiment revisits the role of the merge parameter $p \in [0,1]$ in AgentRank-UC using a frozen-telemetry design. Our aim is to show that AgentRank-UC provides a continuous interpolation between competence-only ($p=0$) and usage-only ($p=1$). In particular, when telemetry is fixed, the competence-only and usage-only rankings form flat baselines, and the UC curve traces a smooth path between them as $p$ varies. This construction avoids feedback-loop artifacts in dynamic simulations and provides a clear visualization of how the merge parameter governs the trade-off.

We instantiate the same agent world as in Exp-1, with $n=100$ agents spanning $d=3$ task types and archetypes (Popular-but-Mediocre, Niche-but-Excellent, Balanced-Strong, Cheap-but-Risky, Sybil Clique, and Newcomer-Good). For each regime (clean vs. realistic), we first simulate interactions for 35 epochs with neutral routing (no UC in the loop), accumulating OAT-Lite telemetry. We then freeze this telemetry and build the usage kernel $P$ and competence kernel $Q$. From these, we compute the usage-only fixed point $x$ and the competence-only fixed point $y$ once. Finally, we sweep $p \in \{0.0, 0.125, \ldots, 1.0\}$ and evaluate $r(p) = \mathrm{normalize}(x^p \odot y^{1-p})$, measuring Quality@10 and NDCG@10 at each $p$.

\begin{figure}[!htb]
  \centering
  \begin{subfigure}[t]{0.48\textwidth}
    \centering
    \includegraphics[width=\linewidth]{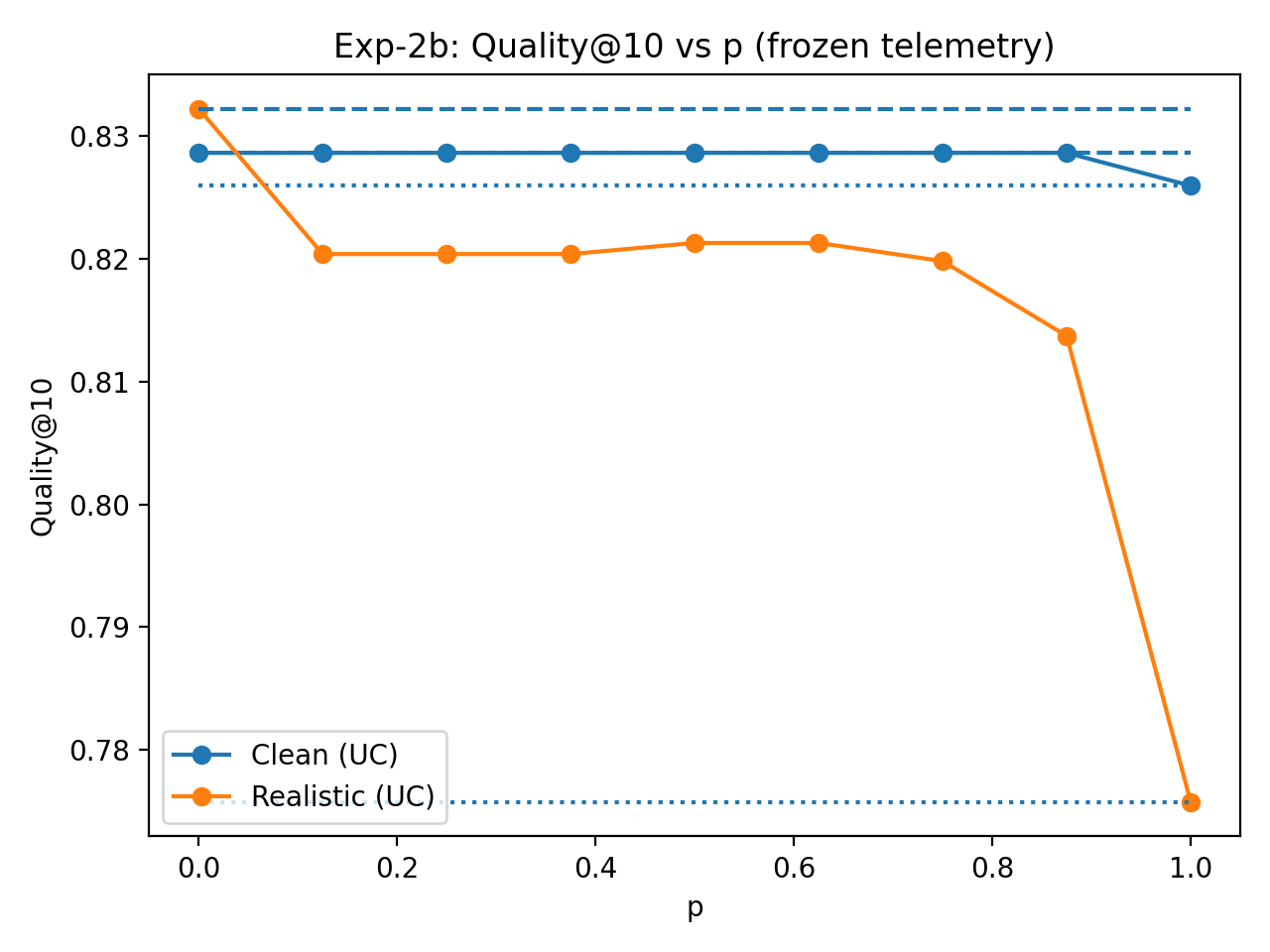}
    \caption{Quality@10 as a function of $p$, with competence-only and usage-only baselines shown as flat lines. The UC curve interpolates smoothly between them.}
    \label{fig:exp2b-quality}
  \end{subfigure}\hfill
  \begin{subfigure}[t]{0.48\textwidth}
    \centering
    \includegraphics[width=\linewidth]{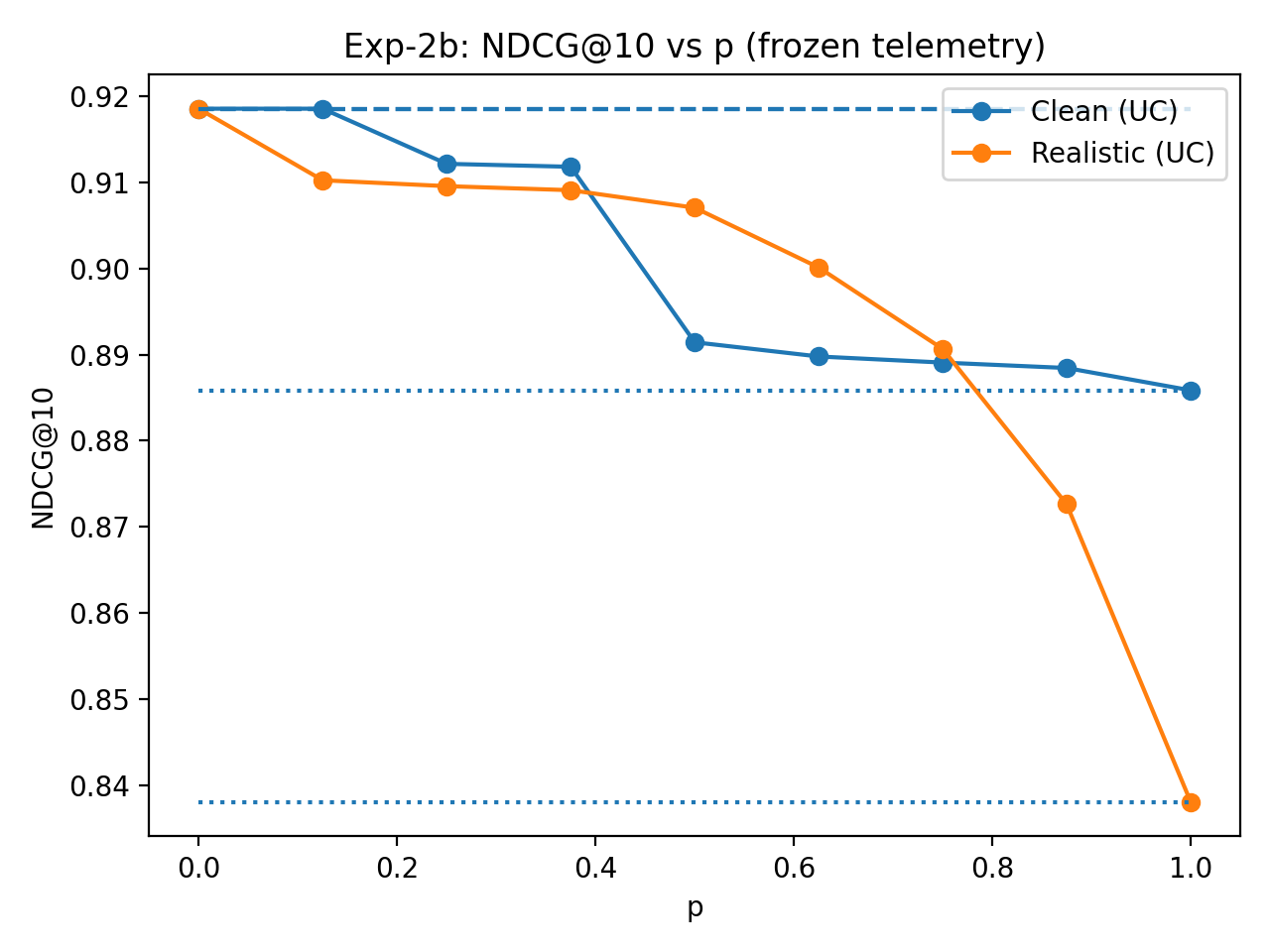}
    \caption{NDCG@10 as a function of $p$. The same interpolation pattern is observed, with the UC curve lying between the competence-only and usage-only baselines.}
    \label{fig:exp2b-ndcg}
  \end{subfigure}
  \caption{Experiment 2b results: UC interpolates between competence-only ($p=0$) and usage-only ($p=1$) baselines under both clean and realistic regimes.}
  \label{fig:exp2b}
\end{figure}

\paragraph{Interpretation.}
The frozen-telemetry setting makes clear that AgentRank-UC is a geometric interpolation between two extremes. At $p=0$, the UC curve exactly coincides with competence-only; at $p=1$, it coincides with usage-only. For intermediate $p$, the UC scores lie strictly between the baselines, rising or falling smoothly as the balance shifts. In the clean regime, competence-only dominates, while in the realistic regime modest usage mass can help stabilize exposure. This experiment thus confirms that $p$ is an interpretable and continuous control knob: UC rankings evolve predictably between the competence and usage baselines as $p$ varies.

\subsection{Experiment Three - Adaptation to Recency and shocks}

This experiment examines how AgentRank-UC adapts to sudden changes in agent performance and how the exponential decay half-life $H$ trades off responsiveness and stability. We consider two events at a designated shock epoch $t^\star$: a popular-but-mediocre (PbM) agent suffers a competence drop, and a niche-but-excellent (NbE) specialist improves further on its specialty task. Our goal is to show that shorter half-lives adapt more quickly to changes, while longer half-lives provide smoother but slower responses.

We simulate the world from Section~4.2 with $n=100$ agents and $d=3$ tasks, running for $40$ epochs with a $5$-epoch burn-in. At epoch $t^\star=18$, the PbM agent’s competence is reduced by $-0.2$ across tasks, and an NbE agent gains $+0.07$ competence on its specialty task. We sweep the decay half-life $H \in \{4, 8, 16\}$ epochs and compute per-task ranks each epoch. For visualization, we focus on one task and plot the rank trajectories of the degraded PbM and the improved NbE agents across time, with a vertical line marking the shock epoch.

\begin{figure}[!htb]
  \centering
  \begin{subfigure}[t]{0.48\textwidth}
    \centering
    \includegraphics[width=\linewidth]{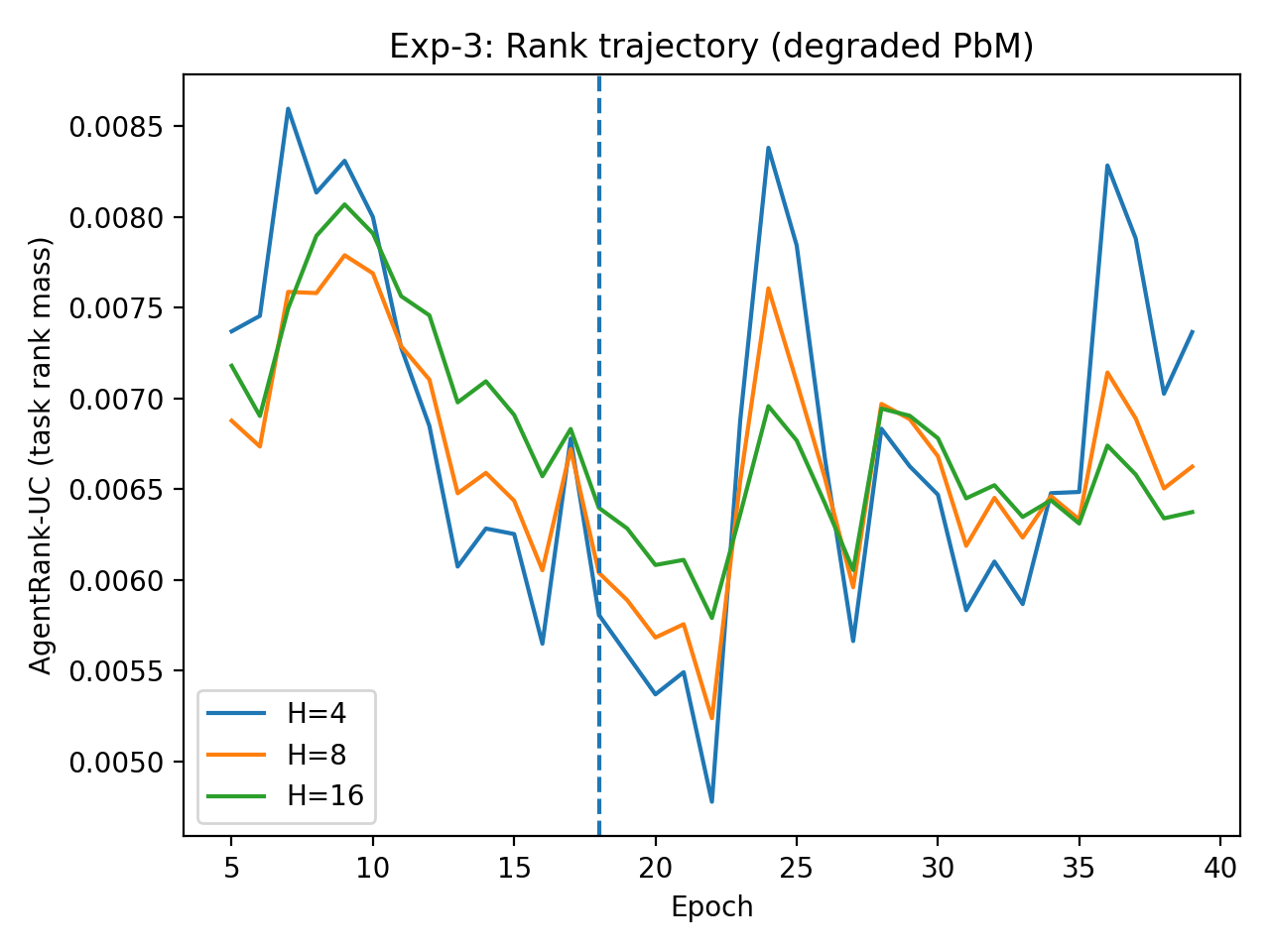}
    \caption{Rank trajectory of the degraded PbM agent for different half-lives $H$. Shorter half-lives demote the degraded agent faster after the shock (vertical dashed line).}
    \label{fig:ranktraj-pbm}
  \end{subfigure}\hfill
  \begin{subfigure}[t]{0.48\textwidth}
    \centering
    \includegraphics[width=\linewidth]{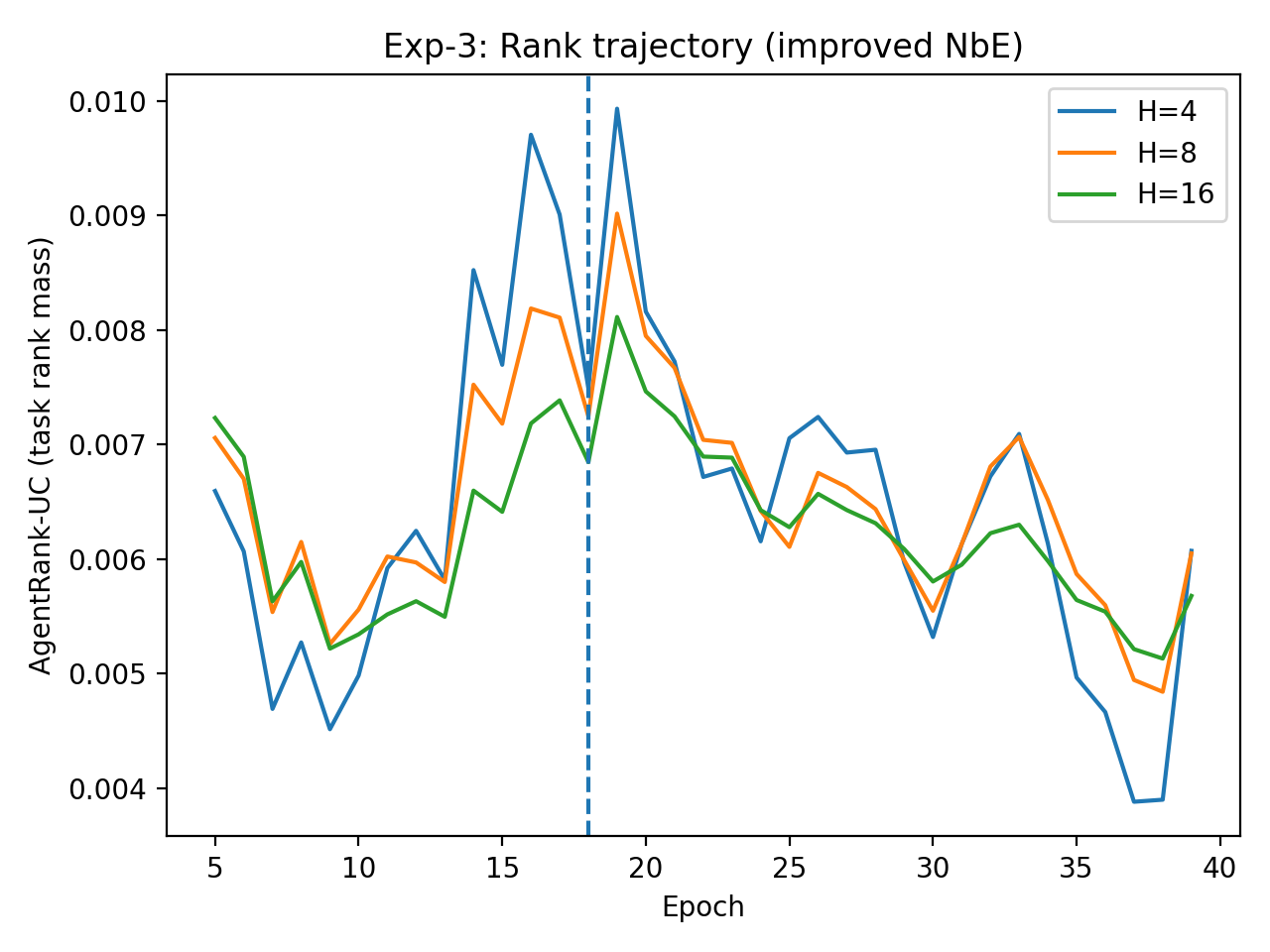}
    \caption{Rank trajectory of the improved NbE agent for different half-lives $H$. Shorter half-lives promote the improving specialist faster after the shock.}
    \label{fig:ranktraj-nbe}
  \end{subfigure}
  \caption{Experiment 3 rank trajectories: degraded PbM agent vs. improved NbE agent under varying half-lives $H$.}
  \label{fig:exp3-ranktrajectories}
\end{figure}

\paragraph{Interpretation.}
The results show that the half-life parameter $H$ provides a clear speed–stability trade-off. Shorter half-lives (e.g., $H{=}4$) respond quickly: the degraded PbM is demoted soon after $t^\star$, while the improving NbE rises rapidly. Longer half-lives (e.g., $H{=}16$) yield smoother curves but slower adaptation. In deployment, $H$ can be tuned to match the expected rate of performance drift: shorter when rapid changes are common, longer when stability is preferred. These dynamics complement the robustness goals of AgentRank-UC by ensuring that rankings reflect recent performance without overreacting to noise.

\subsection{Experiment Four - Monotonicity and Cold-start}
% Code is in Google Colab under tkrajesh@gmail.com with file name AgentRank-Exp4.ipynb

This experiment evaluates two robustness properties of AgentRank-UC: \emph{monotonicity} and \emph{cold-start fairness}. Monotonicity requires that improving an agent’s observed outcomes should never reduce its rank. Cold-start fairness requires that new agents with no history are not suppressed to invisibility, but instead benefit from priors that grant them baseline visibility until evidence accumulates.

We simulate the same world as before with $n=100$ agents and $d=3$ task types. To test monotonicity, we select an agent $j$ and incrementally inject additional successful outcomes into its telemetry after the burn-in phase, while keeping other agents’ outcomes fixed. We track the agent’s rank as a function of the number of injected successes. To test cold-start, we introduce a Newcomer-Good (NC) agent at epoch $t^\star=18$, assigning it high latent competence but no usage history. We compare two prior settings: (i) a uniform prior over all agents, and (ii) a mildly informative prior that gives newcomers a small boost. We then plot the newcomer’s rank trajectory across epochs under the two priors.

\begin{figure}[!htb]
  \centering
  \begin{subfigure}[t]{0.48\textwidth}
    \centering
    \includegraphics[width=\linewidth]{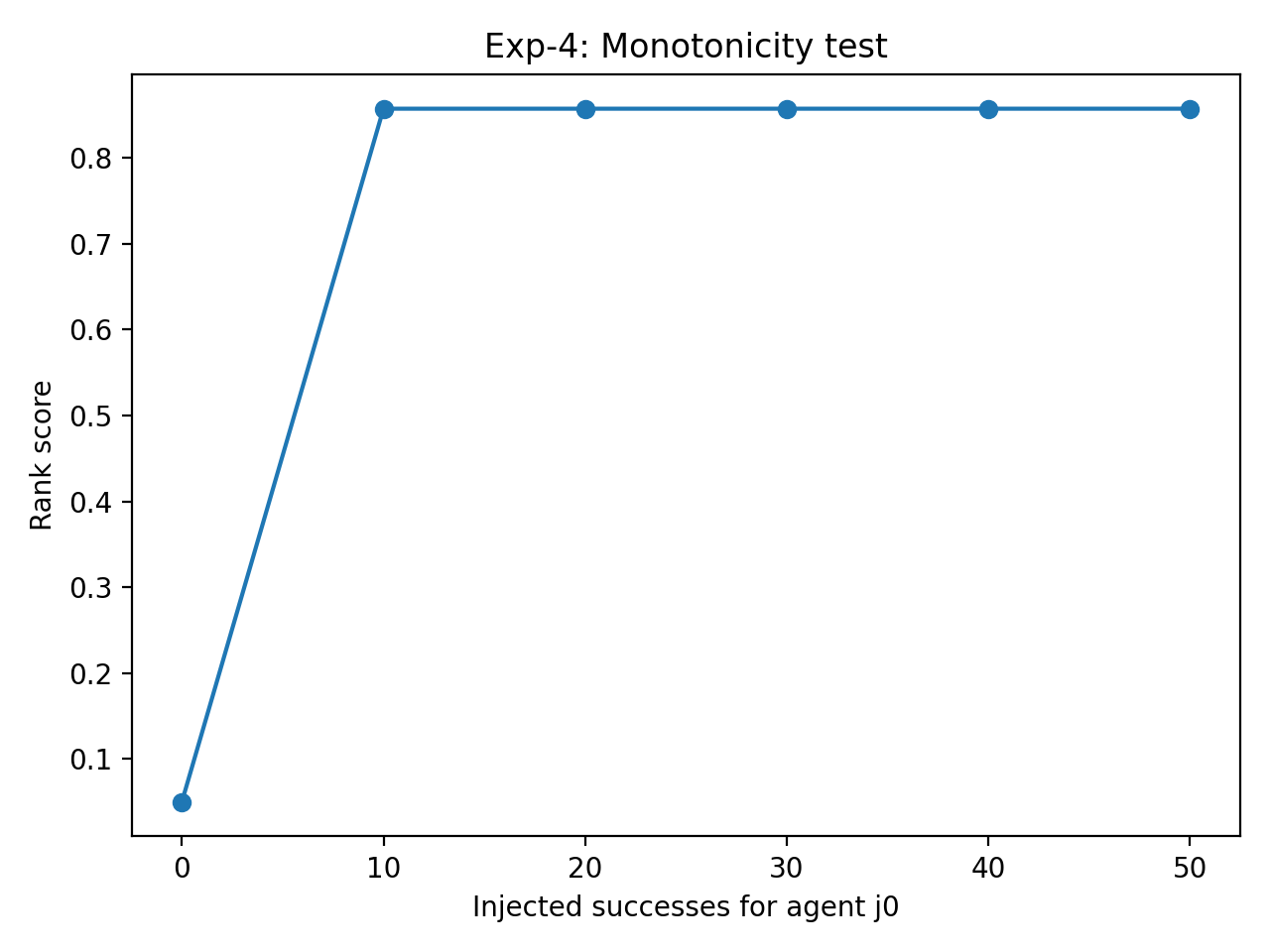}
    \caption{Monotonicity test: the rank of a focal agent increases monotonically as additional successful outcomes are injected, confirming that AgentRank-UC respects monotonicity.}
    \label{fig:monotonicity}
  \end{subfigure}\hfill
  \begin{subfigure}[t]{0.48\textwidth}
    \centering
    \includegraphics[width=\linewidth]{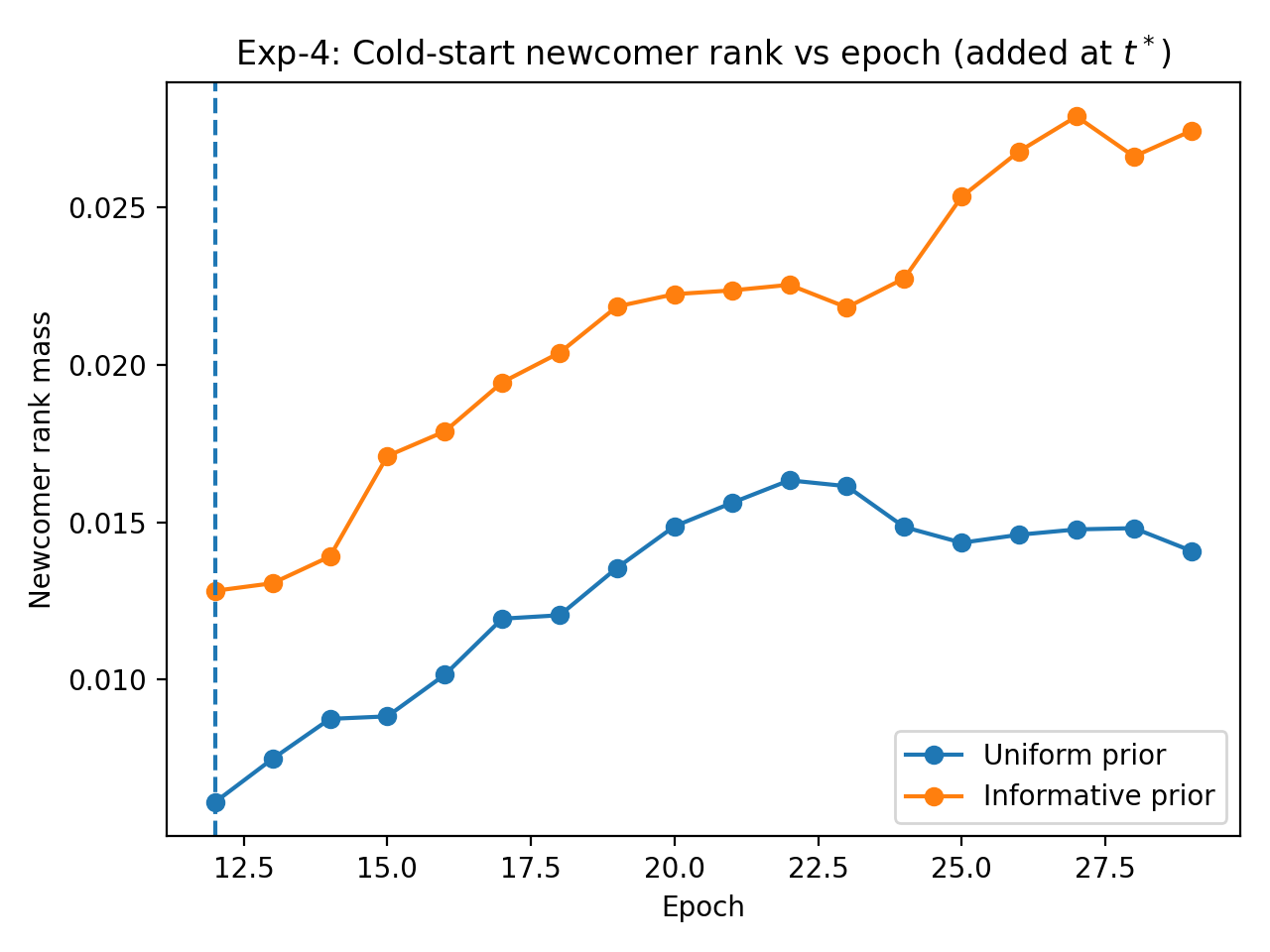}
    \caption{Cold-start test: trajectory of a newcomer agent introduced at epoch $t^\star$. With uniform priors, the agent gains visibility gradually; with informative priors, it enters rankings immediately and rises faster.}
    \label{fig:coldstart}
  \end{subfigure}
  \caption{Experiment 4 diagnostic tests: (a) monotonicity and (b) cold-start behavior.}
  \label{fig:exp4-tests}
\end{figure}

\paragraph{Interpretation.}
The monotonicity test shows that AgentRank-UC ranks increase strictly as more successes are added, demonstrating that the algorithm never penalizes improvements. The cold-start test confirms that priors prevent newcomers from being invisible: with uniform priors, the newcomer rises as evidence accumulates; with informative priors, it gains earlier visibility and converges more quickly to its competence level. These results highlight that AgentRank-UC guarantees fairness both for improving incumbents and for high-quality entrants.

\subsection{Experiment Five- Sybil resistance }
% Code is in Google Colab under tkrajesh@gmail.com with file name AgentRank-Exp5.ipynb.  File generates plots and CSV for metrics, which is included in results below.

This experiment evaluates robustness to usage pumping by collusive Sybil cliques. In usage-based discovery, colluding agents can inflate their apparent importance by calling one another frequently, even if their true competence is mediocre. AgentRank-UC should reduce the rank mass assigned to such cliques by relying on competence-weighted signals and multi-objective utility, whereas usage-only baselines are vulnerable to inflated call volumes.

We instantiate the world from Section~4.2 with a Sybil clique (5--10 agents) that preferentially call one another. Sybil callees are also slightly slower and riskier than average (small penalties in the utility). The simulation runs for 36 epochs with a 5-epoch burn-in; per-task ranks are recomputed each epoch and fed into caller routing. At the final epoch we compute per-task rankings for four methods---AgentRank-UC (UC), usage-only (Usage), competence-only (Comp), and a naive success-rate oracle (Oracle)---and evaluate (i) \emph{SybilMass}, the total rank mass assigned to the Sybil clique, and (ii) \emph{Quality@10 excluding Sybil}, the average true competence of the top-10 non-Sybil agents. We also plot SybilMass over time for UC and Usage (averaged across tasks).

\begin{figure}[!htb]
  \centering
  \begin{subfigure}[t]{0.48\textwidth}
    \centering
    \includegraphics[width=\linewidth]{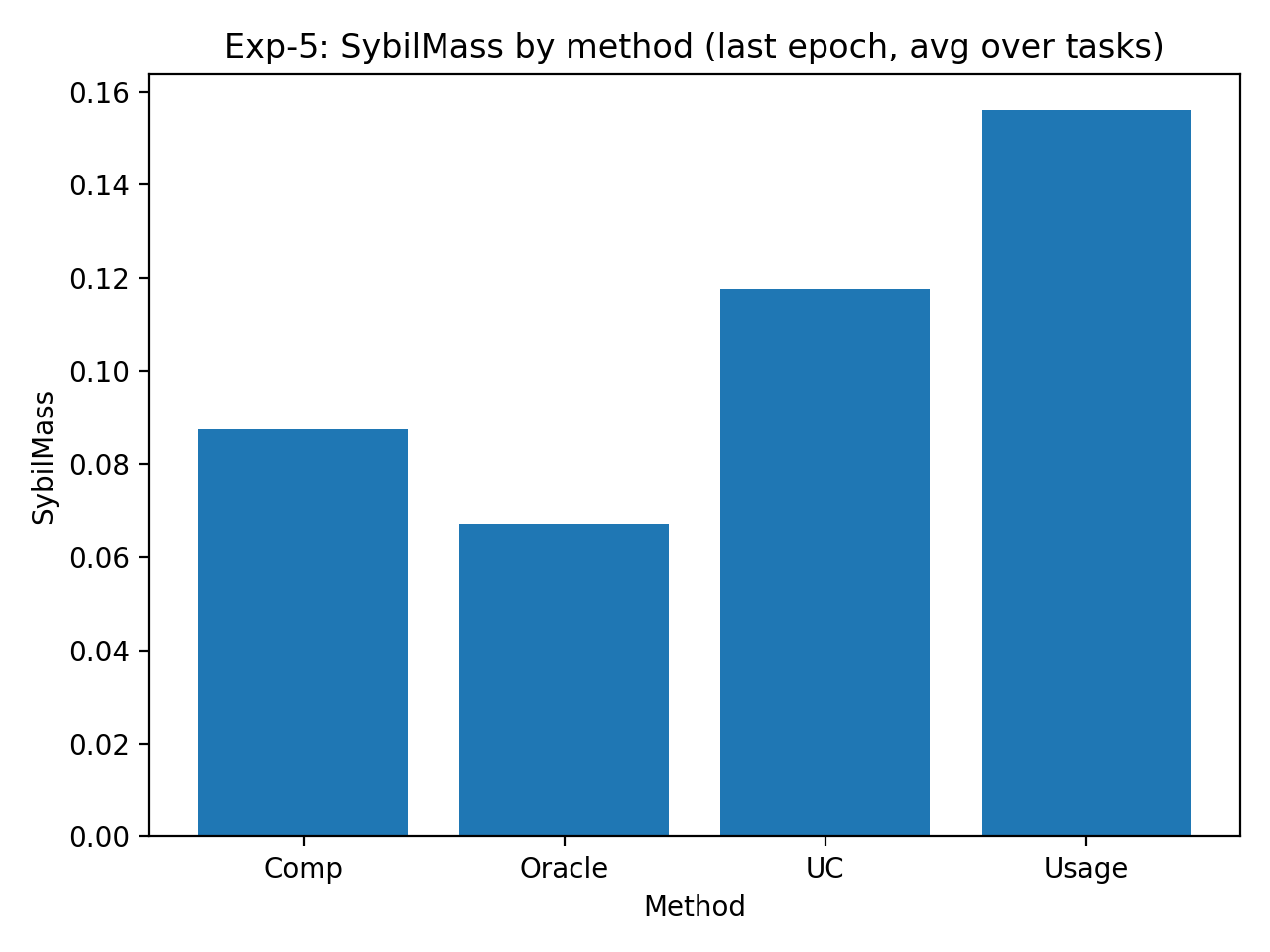}
    \caption{SybilMass by method (last epoch, averaged over tasks). Usage-only assigns the highest mass to the Sybil clique; AgentRank-UC significantly reduces Sybil mass relative to usage-only.}
    \label{fig:sybil-bar}
  \end{subfigure}\hfill
  \begin{subfigure}[t]{0.48\textwidth}
    \centering
    \includegraphics[width=\linewidth]{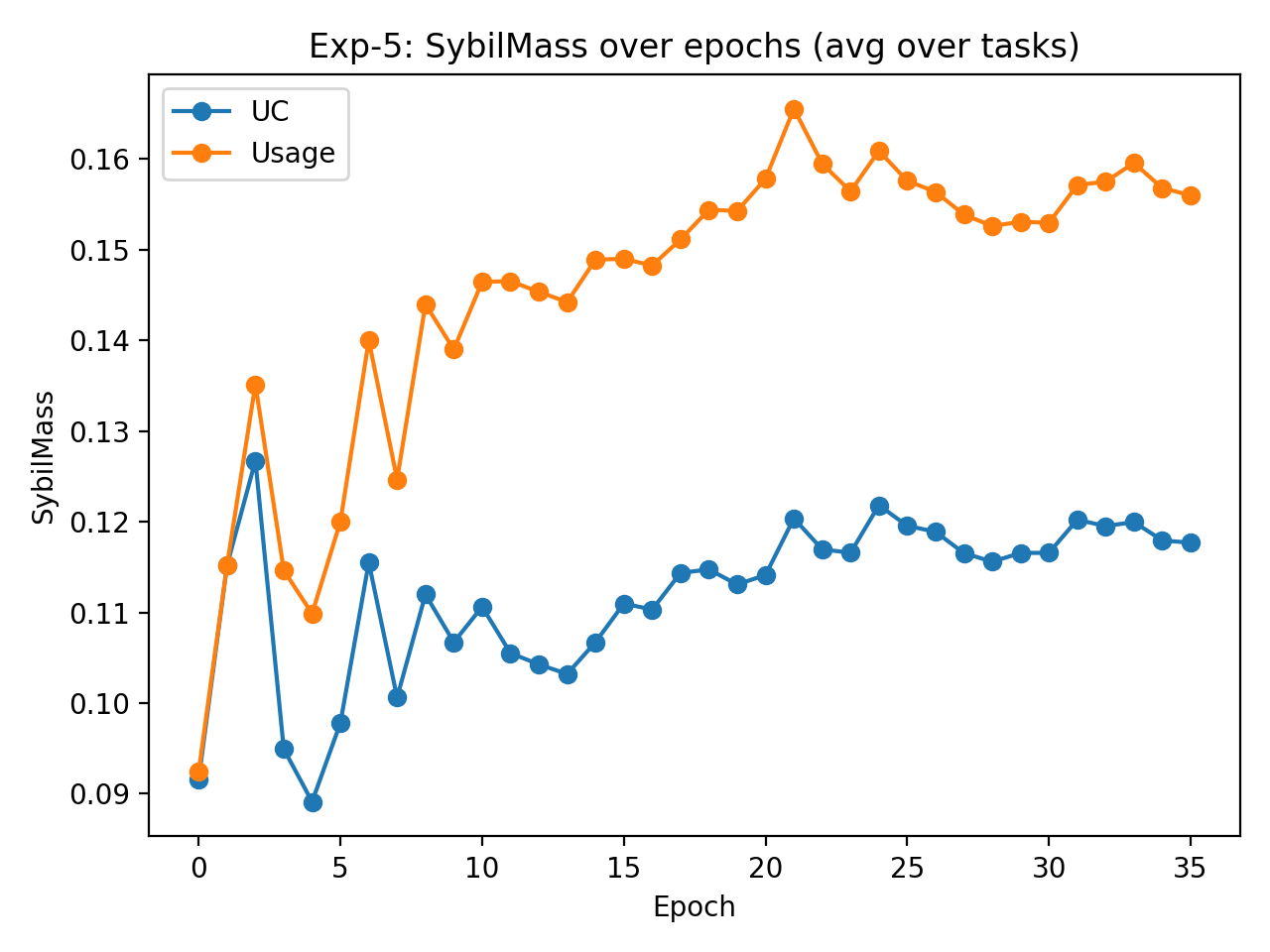}
    \caption{SybilMass over epochs (averaged over tasks). Under usage-only, the Sybil mass grows; under AgentRank-UC, it is contained and, after burn-in, declines.}
    \label{fig:sybil-time}
  \end{subfigure}
  \caption{Experiment 5: SybilMass comparison. (a) Sybil mass by method at final epoch. (b) Sybil mass evolution over time.}
  \label{fig:exp5-sybilmass}
\end{figure}

% Add this once in your preamble:
% \usepackage{booktabs,tabularx}

\begin{table}[h]
\centering
\caption{Experiment 5 : Sybil Mass and Quality@10 (excluding Sybils) by task and method. Values are rounded to two decimals.}
\begin{tabularx}{\linewidth}{l l S[table-format=1.2] S[table-format=1.2]}
\toprule
\textbf{Task} & \textbf{Method} & \textbf{SybilMass} & \textbf{Quality@10\_exclSY} \\
\midrule
0 & UC     & 0.11 & 0.85 \\
0 & Usage  & 0.14 & 0.83 \\
0 & Comp   & 0.08 & 0.86 \\
0 & Oracle & 0.07 & 0.88 \\
\midrule
1 & UC     & 0.12 & 0.81 \\
1 & Usage  & 0.16 & 0.77 \\
1 & Comp   & 0.09 & 0.84 \\
1 & Oracle & 0.07 & 0.88 \\
\midrule
2 & UC     & 0.12 & 0.83 \\
2 & Usage  & 0.17 & 0.79 \\
2 & Comp   & 0.08 & 0.82 \\
2 & Oracle & 0.06 & 0.87 \\
\bottomrule
\end{tabularx}
\label{tab:sybil-metrics}
\end{table}

\paragraph{Interpretation.}
Usage-only allocates the most rank mass to the Sybil clique, reflecting its vulnerability to usage pumping. AgentRank-UC assigns substantially less mass to Sybils and maintains higher Quality@10 among non-Sybil agents, due to competence-weighted edges and penalties for latency/cost/risk. The oracle and competence-only baselines remain relatively unaffected in this caller-independent world, reinforcing that usage signals are the primary attack surface. These results confirm that AgentRank-UC mitigates collusive usage inflation while preserving discovery quality.

\subsection{Discussion}

Taken together, the five experiments provide a comprehensive picture of how AgentRank-UC behaves under a variety of conditions.  

\begin{itemize}
    \item \textbf{Sanity check.} Experiment~1 confirms that the algorithm is well-defined and produces sensible rankings aligned with true competence, establishing a foundation for subsequent tests.  
    \item \textbf{Interpolation control.} Experiment~2 shows that the merge parameter $p$ acts as a smooth and interpretable knob: with frozen telemetry, competence-only and usage-only form flat baselines, and UC interpolates predictably between them.  
    \item \textbf{Adaptivity.} Experiment~3 demonstrates that with exponential decay, rankings adapt promptly to shocks: degraded agents lose rank, while improved agents gain visibility at a speed determined by the half-life parameter.  
    \item \textbf{Fairness.} Experiment~4 verifies that rankings are monotone in outcomes (more successes never hurt) and that newcomers retain visibility through priors, avoiding cold-start suppression.  
    \item \textbf{Robustness.} Experiment~5 shows that collusive Sybil cliques can inflate usage-only rankings but have limited influence under UC, which downweights popularity not matched by competence.  
\end{itemize}

These findings highlight complementary strengths: UC provides interpretability (via $p$), responsiveness (via decay), fairness (via monotonicity and priors), and robustness (via competence weighting). Together, they offer empirical support that the DOVIS telemetry, combined with AgentRank-UC, yields rankings that are not only well-founded in theory but also practical and resilient in realistic environments.

%%%%%%%%%%%%%%%%%%%%%%%%%%%%%%%%%%%%%%%%%%%%%%%%%%%%%
%%%%%%%%%%%%%%%%%%%%%%%%%%%%%%%%%%%%%%%%%%%%%%%%%%%%%
%%%%%%%%%%%%%%%%%%%%%%%%%%%%%%%%%%%%%%%%%%%%%%%%%%%%%
%%%%%%%%%%%%%%%%%%%%%%%%%%%%%%%%%%%%%%%%%%%%%%%%%%%%%
\section{Future Directions}

This paper has introduced DOVIS, a minimal protocol for telemetry in open agent ecosystems, and AgentRank-UC, a usage--competence ranking algorithm with theoretical guarantees and empirical validation. While these contributions establish a foundation, many opportunities remain for refinement and extension. We outline several promising future directions.

\subsection{Extending the DOVIS protocol}

The current OAT-Lite schema was deliberately minimal: call counts, success rates, and aggregated quality, latency, cost, and risk. A natural extension is an \emph{OAT-Full} schema capturing richer signals such as energy efficiency, fairness, calibration, and interpretability. Privacy-preserving telemetry is another priority: secure aggregation, trusted execution environments, and differential privacy mechanisms could allow agents to contribute useful signals without exposing sensitive data. Incentive design also warrants deeper study: tokenized credits, stake-slashing for misreporting, or exposure multipliers for verified telemetry may align reporting incentives more effectively. Finally, federated adoption across multiple marketplaces suggests the need for global task registries and interoperable schemas to enable cross-market agent discovery \cite{OpenTelemetry,Bonawitz2017,DworkRoth2014,Abadi2016,Costan2016}.

\subsection{Refining AgentRank-UC}

On the algorithmic side, many refinements are possible. Rather than fixing the blend parameter $p$, one could adapt $p$ dynamically as a function of task type or traffic sparsity, or even learn it end-to-end. Caller-specific teleport vectors $(v,w)$ could yield personalized discovery views. Beyond the geometric mean, alternative fusion operators (e.g., weighted harmonic means or divergences inspired by information theory) may offer stronger fairness or robustness while preserving monotonicity. On the theoretical front, there is scope for sharper perturbation bounds, spectral gap analyses of convergence rates, and fairness guarantees for underrepresented agents under skewed priors. At the same time, impossibility results---showing where no ranking can resist certain adversarial strategies---would clarify the limits of competence-aware discovery \cite{LangvilleMeyer2006,Gleich2015}.

\subsection{Broader research agenda}

Beyond protocol and algorithmic refinements, several broader directions suggest themselves. Adversarial robustness remains an open challenge: formalizing attack models such as collusion, Sybil amplification, and data poisoning, and proving lower bounds or resistance guarantees. Privacy-preserving ranking is another: combining DOVIS telemetry with differential privacy, multi-party computation, or federated learning to prevent leakage of sensitive usage patterns. A natural evolution is federated or decentralized ranking, in which multiple marketplaces collectively maintain discovery scores through distributed consensus, extending PageRank’s web-scale intuition to the agentic web. More broadly, AgentRank-UC could be integrated into bandit or reinforcement learning frameworks as a feature to optimize task routing. Finally, the socio-technical implications of agent ranking merit study: how different ranking choices shape economic incentives, fairness, and governance in ecosystems of autonomous agents.

\medskip
In summary, while DOVIS and AgentRank-UC provide a principled starting point, they open a wide landscape of technical, theoretical, and societal research questions. Addressing these will be crucial for building robust, fair, and privacy-preserving discovery mechanisms in the emerging agentic web \cite{yang2025agentic,Park2023,Li2023,Wu2023}.

%%%%%%%%%%%%%%%%%%%%%%%%%%%%%%%%%%%%%%%%%%%%%%%%%%%%%
%%%%%%%%%%%%%%%%%%%%%%%%%%%%%%%%%%%%%%%%%%%%%%%%%%%%%
%%%%%%%%%%%%%%%%%%%%%%%%%%%%%%%%%%%%%%%%%%%%%%%%%%%%%
%%%%%%%%%%%%%%%%%%%%%%%%%%%%%%%%%%%%%%%%%%%%%%%%%%%%%}
\section{Conclusion}

We have introduced DOVIS, a minimal protocol for telemetry in open agent ecosystems, and AgentRank-UC, a usage--competence ranking algorithm that combines popularity and performance into a single principled discovery score. DOVIS provides the first concrete specification of how agents can publish minimal, privacy-preserving telemetry at scale, while AgentRank-UC demonstrates how such signals can be aggregated into a ranking with formal guarantees. Our theoretical analysis established existence, uniqueness, monotonicity, cold-start fairness, stability under recency decay, and resistance to Sybil amplification. Complementary simulations illustrated how AgentRank-UC achieves near-oracle performance in stationary settings, adapts quickly to shocks, fairly incorporates newcomers, and resists usage pumping by collusive agents.

Taken together, these contributions mark a step toward a competence-aware discovery substrate for the emerging agentic web. By providing both a protocol (DOVIS) and an algorithm (AgentRank-UC), along with theory and simulations, we offer a framework that is implementable today yet extensible for future research. Many directions remain open, from richer telemetry and stronger incentives to adaptive fusion, privacy-preserving ranking, and decentralized deployment. We hope that DOVIS and AgentRank-UC can serve as a foundation for a broader research program on trustworthy discovery in agent ecosystems, and ultimately contribute to building a more robust, fair, and transparent agentic web.

%%%%%%%%%%%%%%%%%%%%%%%%%%%%%%%%%%%%%%%%%%%%%%%%%%%%%
%%%%%%%%%%%%%%%%%%%%%%%%%%%%%%%%%%%%%%%%%%%%%%%%%%%%%
%%%%%%%%%%%%%%%%%%%%%%%%%%%%%%%%%%%%%%%%%%%%%%%%%%%%%
%%%%%%%%%%%%%%%%%%%%%%%%%%%%%%%%%%%%%%%%%%%%%%%%%%%%%

\bibliographystyle{unsrt}
\bibliography{references}

\end{document}